\documentclass[11pt]{article}

\PassOptionsToPackage{table}{xcolor} 
\usepackage[final]{acl}

\usepackage{times}
\usepackage{latexsym}

\usepackage[T1]{fontenc}

\usepackage[utf8]{inputenc}

\usepackage{microtype}

\usepackage{inconsolata}

\usepackage{graphicx}

\usepackage{booktabs}
\usepackage{amsfonts}
\usepackage{nicefrac}
\usepackage{amsmath}
\usepackage{subfigure}
\usepackage{amsthm}
\usepackage{multirow}
\usepackage{tcolorbox}
\usepackage{makecell}
\usepackage{pifont}

\newtheorem{assumption}{Assumption}

\newtheorem{proposition}{Proposition}
\newtheorem{corollary}{Corollary}
\newtheorem{remark}{Remark}
\newtheorem*{proposition*}{Proposition}

\definecolor{lightroyalblue}{HTML}{F6F8FD}
\definecolor{royalblue}{HTML}{4169E1}
\definecolor{lighterblue}{HTML}{f2fafd}
\definecolor{LightCyan}{rgb}{.9, .95, 1.}
\definecolor{mydarkred}{rgb}{0.6,0,0}
\definecolor{mydarkgreen}{rgb}{0,0.6,0}
\newtcolorbox{abox}{colback=lightroyalblue,colframe=black}
\DeclareMathOperator{\Var}{Var}

\def\eqref#1{equation~\ref{#1}}
\def\Eqref#1{Equation~\ref{#1}}

\title{Uncertainty-Aware Budget Allocation for Efficient LLM Tree Search}

\author{
  \textbf{Jianguo Huang},
  \textbf{Zhiyi Lyu},
  \textbf{Fuxiang Zhang}, 
  \textbf{Yanchen Deng}\thanks{Corresponding author.},
  \textbf{Deheng Ye},
  \textbf{Bo An}\\
  Nanyang Technological University, Singapore
}

\begin{document}

\maketitle

\begin{abstract}
Tree search enhances multi-step reasoning in large language models (LLMs), yet prevailing methods expand every step with a fixed budget, irrespective of the marginal utility of expansion. We first show, across existing search algorithms, that continuations sampled from a shared intermediate state are largely duplicates and collapse to a few distinct answers. Critically, we find that the semantic dispersion of sibling actions predicts the redundancy of their downstream continuations, permitting a subtree's budget to be fixed before expansion. An information-theoretic analysis motivates this: treating the per-step budget as the Monte Carlo sample size for estimating a step's residual answer entropy, we derive a sufficient budget that shrinks with sibling dispersion. We accordingly propose \textbf{Uncertainty-Aware Allocation} (UAA), a training-free framework that decouples value from uncertainty: values determine \emph{which} candidates are promising, while uncertainty, estimated from sibling dispersion at no additional model cost, determines \emph{how many} merit expansion, via uncertainty-aware pruning and uncertainty-aware budgeting. Across mathematical reasoning, logical reasoning, and planning benchmarks, UAA reduces token usage by up to $90\%$ and latency by up to $9\times$ at comparable accuracy, generalizing across four tree search algorithms and both open- and closed-source LLMs.
\end{abstract}

\section{Introduction}
\label{sec:intro}
Large language models (LLMs) have demonstrated impressive capabilities in natural language reasoning~\citep{deepseek-ai2025deepseekr1, yang2025qwen3, nvidia2025nvidia,singh2025openai}, but they still struggle with tasks that demand sustained and accurate multi-step reasoning~\citep{ ahn2024large, wei2025satbench, lin2025zebralogic, banerjee2025synthtrips}.
To address this, Chain-of-Thought (CoT)~\citep{wei2022chain} guides LLMs to decompose the problem-solving process into multiple intermediate reasoning steps. CoT improves interpretability and performance, but commits to a single linear chain with no mechanism to backtrack, so that a single flawed step can cascade and corrupt the final answer.

This brittleness motivates tree-based search methods, which maintain multiple candidate trajectories to enable more robust reasoning. Specifically, tree search algorithms, including Beam Search~\citep{yao2023tree,xie2023self,kang2024mindstar,snell2024scaling}, and Monte Carlo Tree Search (MCTS)~\citep{hao2023reasoning, feng2023alphazero, jiang2024technical, wang2024q}, have emerged as a promising test-time scaling (TTS) paradigm to find high-value reasoning trajectories by explicitly exploring the space of the intermediate reasoning steps. However, given the vast space of reasoning steps, these methods often incur significant computational overhead~\citep{grosse2024uncertainty,wang2025don,lyu2025let}, making them impractical for real-world deployment. For example, MCTS typically requires many iterations to balance exploration and exploitation, inflating the token and latency cost.
Besides the intractable search space, another key factor is the flexibility of budget allocation. Most existing tree search algorithms adopt a uniform budgeting scheme, i.e., expanding a fixed number of children at each decision step without considering the actual utility (e.g., difficulty, uncertainty, etc.). This rigid strategy is inherently wasteful: trivial steps consume the same expansion budget as genuinely challenging ones, leading to inefficient use of test-time compute. Our pilot study in this work reveals that the majority of continuations sampled from the same intermediate state have pairwise cosine similarity exceeding 0.95, confirming widespread redundancy in the reasoning tree, highlighting the need for strategic allocation of budget over decision steps. 

Several attempts have been made to introduce more flexible budget allocation during tree search. Recent approaches~\citep{wang2024litesearch, wang2025don, liu2025equivpruner} merge similar reasoning steps or prune redundant ones during search. By leveraging node-level importance scores, these methods reduce unnecessary computation and lower overall token consumption.
However, the existing acceleration techniques still face significant challenges in computational cost and generality. Specifically, they often require training additional models to estimate step-level similarities, which not only brings extra training and serving overheads but also lacks transferability across different reasoning tasks and datasets. Moreover, most existing approaches merge or prune redundant actions \emph{locally} at the current reasoning step~\citep{tian2024toward,wang2025don}, without exploiting a more informative structure: the semantic redundancy among sibling actions at the current step predicts the redundancy of their downstream continuations. As a result, these methods tend to over-explore semantically saturated regions of the search space, wasting test-time compute. These limitations raise a natural question: \textit{Can we accelerate tree-based reasoning by pruning the predictable redundancy of downstream trajectories from internal signals alone, without external models and without compromising performance?}

To support this intuition, we provide an information-theoretic analysis showing that, under a Lipschitz assumption, a smaller sampling budget suffices to estimate a step's residual answer-entropy to a fixed precision with high probability when the step uncertainty is low. This result motivates a non-uniform allocation policy: rather than expanding every step with the same budget, compute should be spared at low-uncertainty steps and preserved for high-uncertainty ones, where additional exploration meaningfully improves answer coverage. Motivated by these insights, we propose \textbf{Uncertainty-Aware Allocation} (UAA), a training-free, plug-and-play framework that dynamically allocates the expansion budget at each reasoning step based on its uncertainty. UAA uses cosine similarity among sibling actions as a lightweight proxy for uncertainty; crucially, this current-step dispersion is predictive of the redundancy in the downstream continuations (Appendix~\ref{appendix:current_step}), letting UAA set the budget for a subtree before expanding it. It reduces unnecessary exploration via two complementary mechanisms: (i) uncertainty-aware pruning (UAP), which retains only a high-quality subset of candidate actions at low-uncertainty steps, and (ii) uncertainty-aware budgeting (UAB), which shrinks the next-step expansion budget for retained low-uncertainty actions.

\begin{enumerate}
    \item We find that continuations from the same intermediate state in tree search are often semantically redundant, and show this holds beyond math (StrategyQA, planning) and beyond a single algorithm/model (MCTS, Qwen3-32B).
    \item We propose \emph{Uncertainty-Aware Allocation} (UAA), a \emph{training-free}, \emph{plug-and-play} framework that separates value from uncertainty: values decide \emph{which} candidates are promising, while uncertainty decides \emph{how many} candidates are worth expanding.
    \item Across Beam Search, MCTS, DVTS, and REBASE, UAA consistently cuts token usage and latency while keeping accuracy comparable, generalizing across Llama3.1-8B-Instruct, Qwen3-32B, and GPT-4o under both white-box and black-box uncertainty proxies.
\end{enumerate}

\begin{figure*}[t]
    \centering
    \subfigure[Similarity of reasoning trajectories]{
        \label{fig:path_similarity}
        \includegraphics[width=0.31\linewidth]{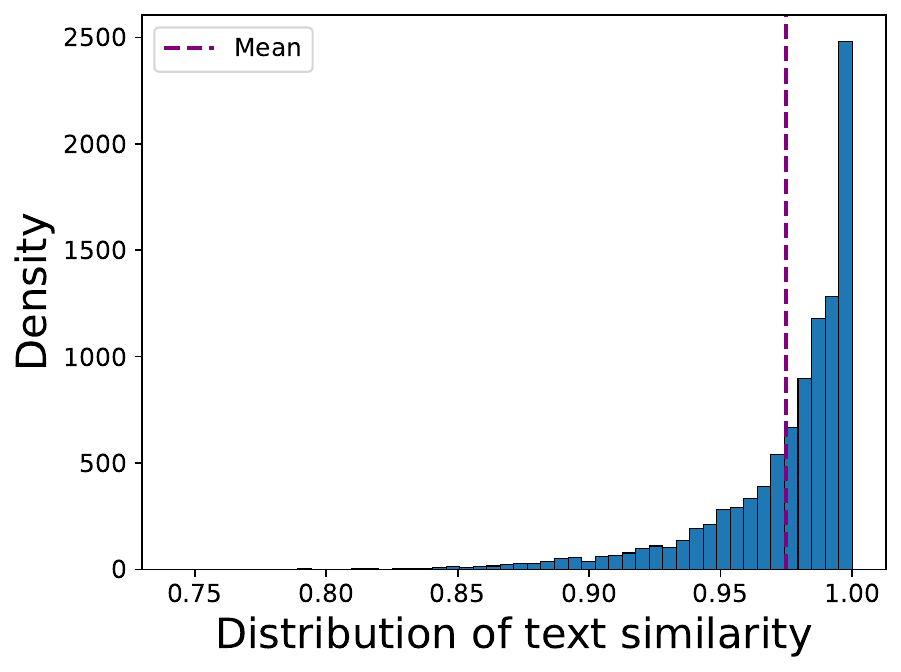}
    }
    \subfigure[Spearman correlation between continuation similarity and answer diversity]{
        \label{fig:path_similarity_diversity_correlation}
        \includegraphics[width=0.31\linewidth]{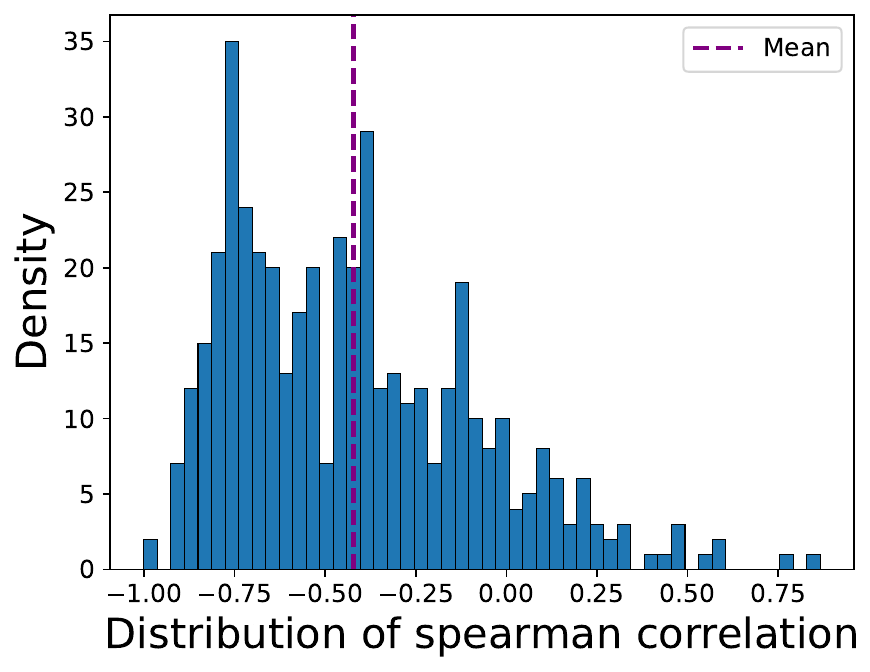}
    }
    \subfigure[Spearman correlation between semantic dispersion of sibling actions and answer accuracy]{
        \label{fig:path_similarity_acc_correlation}
        \includegraphics[width=0.31\linewidth]{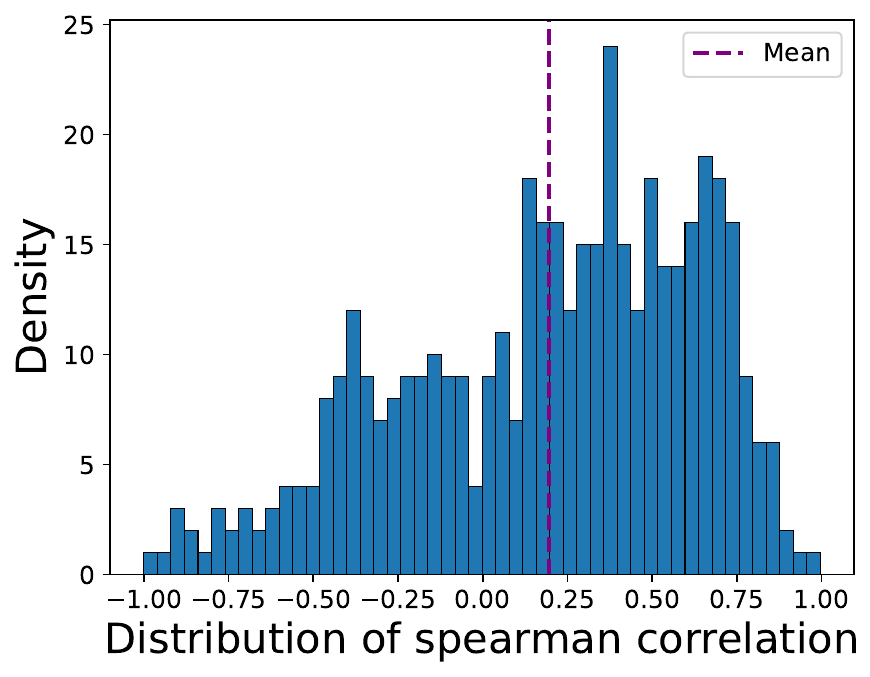}
    }
    \caption{Motivational analysis of reasoning-tree redundancy on \textbf{GSM8K} using reward-guided Beam Search with \textit{LLaMA3.1-8B-Instruct} (beam width $20$, $20$ children per node); continuations are embedded with the \mbox{gte-Qwen2-7B-instruct} encoder. \textbf{(a)} Pairwise cosine similarity between continuations from the same parent state. \textbf{(b)} Per-parent Spearman correlation between continuation similarity and the number of unique final answers. \textbf{(c)} Per-parent Spearman correlation between the semantic dispersion of sibling actions and downstream answer accuracy.}
    \label{fig:reasoning_tree_motivation}
\end{figure*}

\section{Preliminaries}
\label{sec:preliminaries}

\subsection{Reasoning as a sequential decision process}

Given an input question $q$, the reasoning process is decomposed into a sequence of $T$ steps $\mathbf{a} := [a_1, \dots, a_T]$, where each $a_t$ is a line of text or a fixed number of tokens. We model this as a Markov process $\langle \mathcal{S}, \mathcal{A}, \mathcal{T}, \pi_\theta \rangle$, where the state $s_t := [q, a_1, \dots, a_{t-1}]$ is the partial reasoning trace, and the transition is deterministic: $s_{t+1} = [s_t, a_t]$. The LLM policy $\pi_\theta$ defines the distribution $\pi_\theta(a_t \mid s_t)$ over the next step, and a complete reasoning trajectory $[a_1, \dots, a_T]$ terminates with a final answer $y \in \mathcal{Y}$.

\subsection{Tree search for LLM reasoning}
\label{sec:tree_search}

Tree search algorithms~\citep{lightman2023let, wang2023math, yao2023tree, hao2023reasoning, wang2024q} navigate the reasoning space guided by a value function $V$, seeking the optimal trajectory:
\begin{equation*}
    \mathbf{a}^* = \mathop{\arg\max}_{[a_1,\dots,a_T]\in\mathcal{A}^T} V(q, a_1, \dots, a_T).
\end{equation*}
Each node in the search tree corresponds to a state $s_t$, each edge to an action $a_t$, and every root-to-leaf path constitutes a complete trajectory. Beam Search~\citep{xie2023self, feng2023alphazero} maintains $N$ candidates per step: it retains the $N/B$ candidates with the highest value and samples $B$ continuations from each, which restores $N$ candidates for the next step.

\subsection{Value functions}
\label{sec:value_functions}
We adopt process-level value functions, which score intermediate reasoning steps rather than only the final answer. Specifically, we consider two such functions~\citep{yang2018breaking, hao2023reasoning, snell2024scaling}:

\textbf{(i) Model Confidence} measures the LLM's belief in a trajectory via the average log-probability of generated tokens~\citep{yang2018breaking, hao2023reasoning, snell2024scaling, huang2023look, fu2025deep}:
\begin{equation}\label{eq:confidence}
    \resizebox{0.87\columnwidth}{!}{$\displaystyle
    V_c(s_t, a_t) = \frac{\sum_{z=1}^{t} \sum_{k=1}^{\ell_{a_z}} \log \pi_\theta(b_{z,k} \mid s_z, b_{z,1}, \dots, b_{z,k-1})}{\sum_{z=1}^{t} \ell_{a_z}}.
    $}
\end{equation}

\textbf{(ii) Process Reward Models (PRMs)}~\citep{lightman2023let, wang2023math} assign a scalar score $V_r(s_t, a_t)$ to each reasoning step, offering more targeted supervision than outcome-only feedback. PRMs are more robust to incorrect steps, but require a separately trained model.

\section{Motivation}
\label{sec:motivation}
Tree-search algorithms for LLM reasoning typically expand each intermediate state with a fixed number of candidates, regardless of how much new information those candidates carry. In fact, these candidates are often near-duplicates, so a uniform budget wastes compute on steps that add little new reasoning. We make this precise through an empirical pilot study and an information-theoretic analysis linking redundancy to the budget a step actually needs.

\subsection{Empirical pilot study}
\label{sec:pilot_study}

We conduct an empirical study on GSM8K~\citep{cobbe2021training} to reveal the ubiquitous redundancies among reasoning trajectories in tree-search methods, and demonstrate that such redundancies reduce both answer diversity and downstream accuracy. Specifically, given a partial reasoning trace $s_t=[q,a_1,\dots,a_{t-1}]$, we define its continuation as $p^c_{s_t}=[a_t,\dots,a_T]$. The redundancy is measured by first independently sampling $m$ continuations $p^c_{s_t,1},\dots,p^c_{s_t,m}$, and then computing pairwise cosine similarity between their text embeddings extracted from a pre-trained encoder. The analysis is performed over all intermediate states generated during the search process.

Figure~\ref{fig:reasoning_tree_motivation} presents the results of reward-guided beam search~\citep{yao2023tree,snell2024scaling} on \textit{LLaMA3.1-8B-Instruct}~\citep{grattafiori2024llama} (beam width 20, 20 children per node; details in Appendix~\ref{appendix:motivation}). The majority of continuations have a similarity score above 0.95, and the mean similarity approaches 1, confirming prevalent redundancy in the reasoning trajectories.

We further explore how this redundancy correlates with answer quality. First, redundant continuations produce fewer distinct final answers: the Spearman correlation between pairwise similarity and the number of unique answers is predominantly negative with a mean close to $-0.5$ (Figure~\ref{fig:path_similarity_diversity_correlation}). Second, steps with higher semantic dispersion among sibling actions yield higher downstream accuracy, whereas highly redundant steps show negligible marginal accuracy gains under the same budget (Figure~\ref{fig:path_similarity_acc_correlation}). Together, these results establish that \textbf{exploring a highly certain reasoning step whose continuations are semantically similar contributes little to either answer diversity or accuracy}, highlighting the need for an uncertainty-aware allocation policy. More empirical studies on MCTS, Qwen3-32B~\citep{yang2025qwen3}, and StrategyQA~\citep{geva2021did} can be found in Appendix~\ref{appendix:more_empirical_analysis}.

\subsection{Information-theoretic analysis}
\label{sec:theoretical_analysis}

We now formalize the above observation: the number of samples sufficient to estimate a step's residual answer-entropy shrinks as its uncertainty decreases, motivating a non-uniform expansion policy. Let $Y$ be the final answer that the policy $\pi_\theta$ reaches when it continues reasoning from a state; $Y$ is random through the sampling of the remaining steps, with $q$ and $s_t$ fixed. It is the model's \emph{predicted} answer rather than the ground-truth label, which is constant given $q$ and would make every conditional entropy zero. We measure the utility of expanding $A_t$ by the residual answer-entropy $H(Y\mid A_t, S_t=s_t)$, the model's remaining uncertainty about its own answer after committing to $A_t$: for a fixed state $s_t$, a smaller residual entropy means the expansion resolves more of the answer uncertainty~\citep{ton2024understanding}. We estimate this quantity by Monte Carlo: we sample $n$ candidate actions $\{a_t^{(i)}\}_{i=1}^n \stackrel{\text{i.i.d.}}{\sim} \pi_\theta(\cdot\mid s_t)$ and form $\widehat{H}_n := \tfrac{1}{n}\sum_{i=1}^n h(a_t^{(i)})$, where $h(a):=H(Y\mid A_t=a, S_t=s_t)\in[0,R]$ and $R:=\log|\mathcal{Y}|$ for a finite answer set $\mathcal{Y}$. This estimator is idealized, since $h$ is not observed directly; its sample size $n$ plays the role of the per-step expansion budget that UAA modulates.

\begin{figure*}[!t]
    \centering
        \includegraphics[trim=1 160 110 10, clip, width=\linewidth]{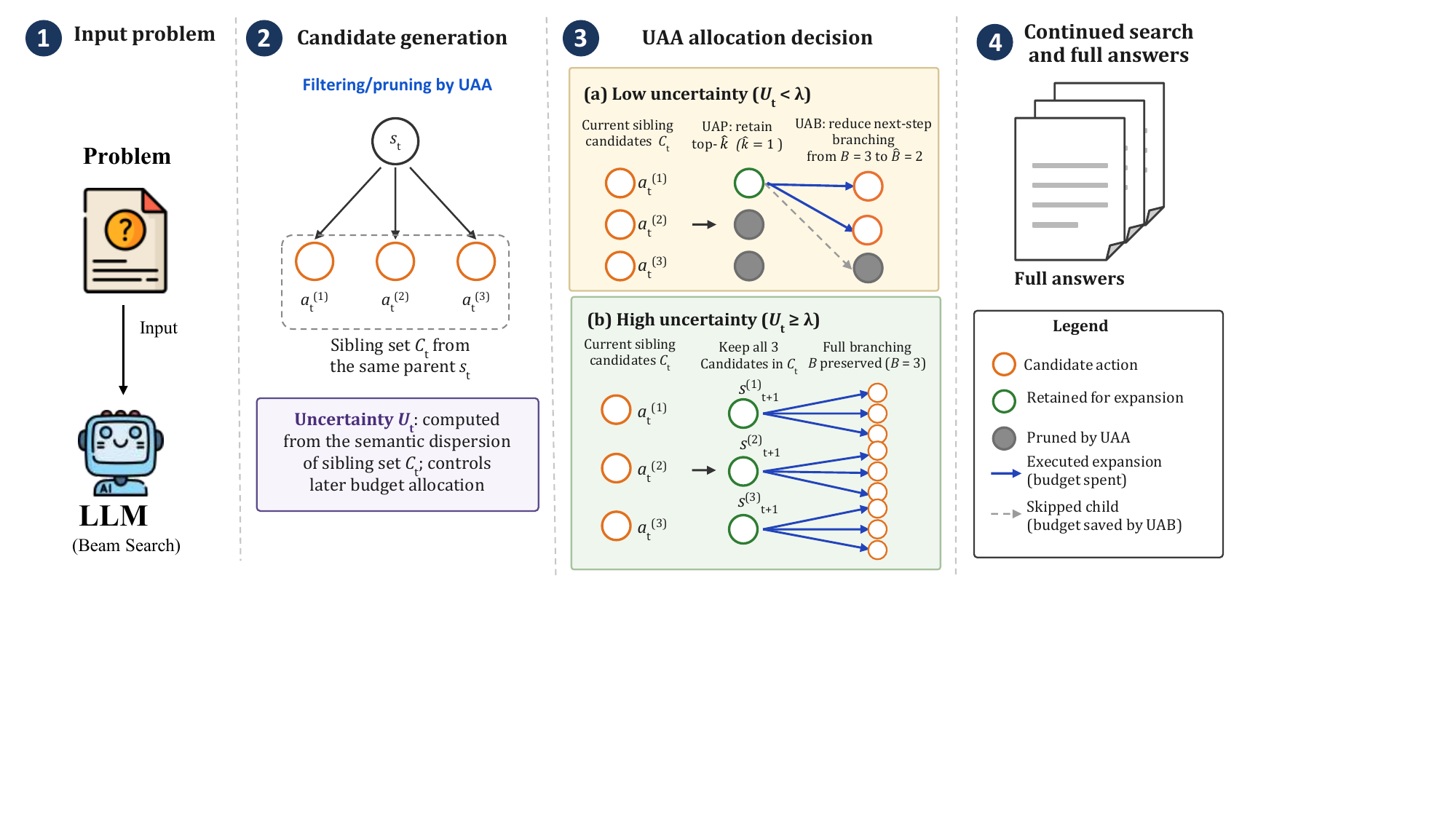}
    \caption{Illustration of UAA integrated with Beam Search, which keeps the top-3 actions per step and expands each into 3 children. At low-uncertainty steps, UAP keeps only the top-$\widehat{k}$ candidates by value and UAB reduces the number of children expanded from each to $\widehat{B}$; at high-uncertainty steps, all candidates and the full expansion budget are preserved. Solid orange arrows denote expansions that are performed (budget spent) and dotted arrows those skipped by UAB (budget saved).}
    \label{fig:uaa_framework}
\end{figure*}

\paragraph{Step uncertainty.} Since textual actions have no natural variance, we measure their dispersion via an embedding $\phi:\mathcal{A}\to\mathbb{R}^d$ with $\|\phi(a)\|=1$, defining the step uncertainty as:
\begin{equation}
\label{eq:step-uncertainty-def}
\begin{aligned}
U_t &:= \Var\!\big(\phi(A_t)\mid s_t\big) \\
&= \mathbb{E}\!\big[\|\phi(A_t)-\mathbb{E}[\phi(A_t)\mid s_t]\|^2\,\big|\,s_t\big].
\end{aligned}
\end{equation}

\begin{assumption}[Lipschitz residual entropy]\label{ass:lipschitz}
There exists $\tilde{h}:\mathbb{R}^d\to[0,R]$ with $h(a)=\tilde{h}(\phi(a))$ for all $a\in\mathcal{A}$, and $\tilde{h}$ is $L$-Lipschitz: $|\tilde{h}(u)-\tilde{h}(v)|\le L\|u-v\|$.
\end{assumption}

\begin{proposition}[Concentration of the entropy estimate]\label{prop:mc-bound}
Let $\sigma^2:=\Var(h(A_t)\mid s_t)\le R^2/4$. Then for any $\varepsilon>0$:
\begin{equation}
\label{eq:bernstein}
\begin{aligned}
&\mathbb{P}\!\left(\big|\widehat{H}_n - H(Y\mid A_t,s_t)\big|\ge\varepsilon\right) \\
&\qquad \le 2\exp\!\left(-\frac{n\,\varepsilon^2}{2\sigma^2+\frac{2}{3}R\,\varepsilon}\right).
\end{aligned}
\end{equation}
\end{proposition}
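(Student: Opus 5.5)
The bound in \eqref{eq:bernstein} is Bernstein's inequality applied to the i.i.d.\ bounded variables $X_i := h(a_t^{(i)})$, $i=1,\dots,n$.

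\textbf{Step 1: identify the mean.} Since $a_t^{(i)} \stackrel{\text{i.i.d.}}{\sim} \pi_\theta(\cdot\mid s_t)$, the $X_i$ are i.i.d.\ with common mean
\[
\mathbb{E}[X_i] = \sum_{a}\pi_\theta(a\mid s_t)\,H(Y\mid A_t=a,S_t=s_t) = H(Y\mid A_t,S_t=s_t).
\]
The last equality is the definition of conditional entropy, averaging the pointwise entropies over the law of $A_t$ given $s_t$. Hence $\widehat{H}_n$ is an unbiased estimate of the target quantity in the proposition. Their variance is $\sigma^2$ by definition.

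\textbf{Step 2: boundedness and the variance bound.} By construction $h(a)\in[0,R]$, since the entropy of a distribution on the finite set $\mathcal{Y}$ is at most $\log|\mathcal{Y}|$. Write $\mu := \mathbb{E}[X_i]$.
\begin{itemize}
\item The centered variables $X_i-\mu$ lie in an interval of length $R$, so $|X_i-\mu|\le R$ almost surely.
\item Popoviciu's inequality gives $\sigma^2\le R^2/4$, which justifies the stated bound on $\sigma^2$.
\end{itemize}

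\textbf{Step 3: apply Bernstein.} For i.i.d.\ zero-mean $Z_i$ with $|Z_i|\le M$ and variance $\sigma^2$, Bernstein's inequality states
\[
\mathbb{P}\Big(\big|\tfrac1n\textstyle\sum_i Z_i\big|\ge\varepsilon\Big)\le 2\exp\!\Big(-\frac{n\varepsilon^2}{2\sigma^2+\frac{2}{3}M\varepsilon}\Big).
\]
Taking $Z_i = X_i-\mu$ and $M=R$ yields exactly \eqref{eq:bernstein}. The factor $2$ comes from a union bound over the upper and lower tails.

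\textbf{Main obstacle.} The only delicate point is the constant $M$ in Bernstein. One is tempted to use $M=R/2$ by centering, but $\mu$ need not be the midpoint of $[0,R]$, so $|X_i-\mu|$ can be as large as $R$. Using $M=R$ is the safe choice and matches the stated constant $\frac{2}{3}R\varepsilon$.

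Assumption~\ref{ass:lipschitz} is not needed for this proposition. It presumably enters afterward, to bound $\sigma^2\le L^2 U_t$, via $\Var(\tilde h(\phi(A_t)))\le \mathbb{E}\big[(\tilde h(\phi(A_t))-\tilde h(\mathbb{E}\phi))^2\big]\le L^2 U_t$, which links the sufficient budget to the step uncertainty.
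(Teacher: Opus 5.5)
Your proposal is correct and takes essentially the same route as the paper. Both set $Z_i=h(a_t^{(i)})$, observe that the $Z_i$ are i.i.d.\ in $[0,R]$ with mean $H(Y\mid A_t,s_t)$, apply the bounded-variable Bernstein inequality with $M=R$, and use Popoviciu's inequality for $\sigma^2\le R^2/4$. Your side remark bounding $\sigma^2\le L^2U_t$ via $\Var(g(X))\le\mathbb{E}[(g(X)-c)^2]$ with $c=\tilde h(\mathbb{E}\phi(A_t))$ is also valid and a little shorter than the paper's i.i.d.-copy argument, though it lies outside this proposition.
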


\noindent Under Assumption~\ref{ass:lipschitz}, $\sigma^2\le L^2 U_t$ (Appendix~\ref{appendix:proof}), so low dispersion implies small $\sigma^2$ and hence faster concentration. Inverting \Eqref{eq:bernstein} with $\sigma^2\le\min\{L^2U_t,\,R^2/4\}$ yields:

\begin{corollary}[Sufficient budget at low uncertainty]\label{cor:uncertainty-sample}
Fix $\varepsilon>0$, $\delta\in(0,1)$. Under Assumption~\ref{ass:lipschitz}, $\mathbb{P}(|\widehat{H}_n-H(Y\mid A_t,s_t)|\ge\varepsilon)\le\delta$ holds whenever
\begin{equation}
\label{eq:sufficient-n}
n \ge \left(\frac{2\min\{L^2U_t,\,R^2/4\}}{\varepsilon^2}+\frac{2R}{3\varepsilon}\right)\ln\frac{2}{\delta}.
\end{equation}
As $U_t\to 0$, the sufficient budget shrinks to $n\ge\tfrac{2R}{3\varepsilon}\ln\tfrac{2}{\delta}$.
\end{corollary}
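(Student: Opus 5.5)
The plan is to invert the Bernstein tail bound of Proposition~\ref{prop:mc-bound}, after first controlling the variance $\sigma^2$ by the step uncertainty $U_t$ through Assumption~\ref{ass:lipschitz}.

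\textbf{Variance bound.} Write $X=\phi(A_t)$ and $\mu=\mathbb{E}[X\mid s_t]$, all conditional on $s_t$. The variance of $h(A_t)=\tilde h(X)$ is the smallest mean squared deviation from any constant, so
\begin{equation*}
\sigma^2 \le \mathbb{E}\big[(\tilde h(X)-\tilde h(\mu))^2\mid s_t\big] \le L^2\,\mathbb{E}\big[\|X-\mu\|^2\mid s_t\big] = L^2 U_t .
\end{equation*}
The constant $\tilde h(\mu)$ is legitimate because $\tilde h$ is defined on all of $\mathbb{R}^d$, and $\mu$ need not lie on the unit sphere. Separately, $h(A_t)\in[0,R]$, so Popoviciu's inequality gives $\sigma^2\le R^2/4$. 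Together, $\sigma^2\le\min\{L^2U_t,R^2/4\}=:\bar\sigma^2$.

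\textbf{Inversion.} The estimator $\widehat H_n$ averages i.i.d.\ bounded terms whose mean is $H(Y\mid A_t,s_t)$. Proposition~\ref{prop:mc-bound} therefore applies, and since its right-hand side is increasing in $\sigma^2$, we may replace $\sigma^2$ by $\bar\sigma^2$. It then suffices to have
\begin{equation*}
2\exp\!\left(-\frac{n\varepsilon^2}{2\bar\sigma^2+\frac{2}{3}R\varepsilon}\right)\le\delta .
\end{equation*}
Taking logarithms, this is equivalent to $n\varepsilon^2\ge(2\bar\sigma^2+\tfrac23 R\varepsilon)\ln\tfrac{2}{\delta}$. Dividing by $\varepsilon^2$ gives exactly \Eqref{eq:sufficient-n}.

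\textbf{Limit and main obstacle.} For the limiting statement, note that as $U_t\to0$ the minimum eventually equals $L^2U_t$, which tends to $0$. The first term of the bound then vanishes, leaving $n\ge\frac{2R}{3\varepsilon}\ln\frac{2}{\delta}$. The only nonmechanical step is the variance bound. The key choice there is to compare $\tilde h(X)$ with the constant $\tilde h(\mu)$ evaluated at the mean embedding, rather than with $\mathbb{E}[\tilde h(X)]$; this is what lets the Lipschitz constant convert the embedding dispersion $U_t$ into a bound on the scalar variance.
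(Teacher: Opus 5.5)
Your proof is correct, and its skeleton is the paper's. You bound $\sigma^2\le\min\{L^2U_t,R^2/4\}$, use that the Bernstein tail is increasing in $\sigma^2$, and solve $2\exp(\cdot)\le\delta$ for $n$. The inversion and the $U_t\to0$ limit are identical to the paper's.

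The one genuine difference is the variance-contraction step. You use the variational characterization $\Var(Z)=\min_c\mathbb{E}[(Z-c)^2]$ with the constant $c=\tilde h(\mu)$. The paper instead (in the remark after its proof) applies the independent-copy identity $\Var(g(X))=\tfrac12\mathbb{E}[(g(X)-g(X'))^2]$ twice: once to $\tilde h(X)$, and once coordinatewise to $X$. Both routes give exactly $L^2U_t$, with no loss in the constant.

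Your version is a shorter one-liner and needs no auxiliary copy. However, it evaluates $\tilde h$ at the mean embedding $\mu$. For unit-norm embeddings $\|\mu\|^2=1-U_t$, so whenever $U_t>0$ the point $\mu$ lies strictly inside the unit ball, off the image of $\phi$. That is legitimate here only because Assumption~\ref{ass:lipschitz} gives $\tilde h$ and its Lipschitz bound on all of $\mathbb{R}^d$, as you note.

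The paper's symmetrized argument only compares $\tilde h$ at pairs of points in the support of $\phi(A_t)$. It would therefore still work if Lipschitzness were assumed only on the unit sphere. In that weaker setting your route would need an extra step: a McShane-type Lipschitz extension to $\mathbb{R}^d$ before evaluating at $\mu$.
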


Corollary~\ref{cor:uncertainty-sample} suggests a simple guideline: when the candidate actions at a step are close to one another, a smaller expansion budget is enough, so the full budget can be reserved for diverse steps. We use it as a guideline rather than a formal guarantee; Appendix~\ref{appendix:proof} states the assumptions behind it, including how it carries over from a step to its children. We next turn this guideline into a concrete allocation rule.

\section{Uncertainty-Aware Allocation}
\label{sec:method}

Building on the observation that low-uncertainty steps yield semantically redundant continuations, we propose \textbf{Uncertainty-Aware Allocation} (UAA), 
a training-free framework that spares the expansion budget at low-uncertainty steps, where siblings are semantically redundant, while preserving the full budget where they are diverse. Specifically, UAA separates value from uncertainty: 
the value function decides \emph{which} candidates are promising, while uncertainty decides \emph{how many} candidates are worth expanding.
UAA realizes this through two complementary mechanisms, \textit{Uncertainty-Aware Pruning} (UAP) and \textit{Uncertainty-Aware Budgeting} (UAB).

Formally, given a reasoning step $s_t$ to be expanded, the policy $\pi_\theta(\cdot\mid s_t)$ first proposes $B_t$ candidate actions 
$\mathcal{C}_t=\{a_t^{(1)},\dots,a_t^{(B_t)}\}\subseteq\mathcal{A}$. 
The step uncertainty $U_t := U(A_t\mid S_t{=}s_t)$ is estimated 
by a lightweight proxy $\widehat{U}_t$ computed from these 
candidates (defined below). 
Figure~\ref{fig:uaa_framework} illustrates the full pipeline 
instantiated on Beam Search.

\textbf{Uncertainty-Aware Pruning (UAP).}
UAP prunes redundant candidate actions only when the current 
step has low uncertainty. Specifically, when $\widehat{U}_t$ 
is below a threshold $\lambda$, UAP keeps the top-$\widehat{k}$ 
candidates ranked by the original value function 
$V(s_t,a_t^{(i)})$ and removes the remaining low-ranked 
siblings. Otherwise, all candidates are kept to preserve 
reasoning diversity:
\[
\widehat{\mathcal{C}}_t =
\begin{cases}
\arg\text{top-}\widehat{k}_{a_t^{(i)}\in\mathcal{C}_t}
V\left(s_t,a_t^{(i)}\right), & \widehat{U}_t<\lambda,\\[4pt]
\mathcal{C}_t, & \text{otherwise,}
\end{cases}
\]
where $\widehat{k}<B_t$ is the number of kept candidates and 
$\lambda$ is the uncertainty threshold. Note that low 
uncertainty indicates only that the siblings are semantically 
redundant and value scores still determine which candidates 
are preserved.

\textbf{Uncertainty-Aware Budgeting (UAB).}
For each retained action $a_t^{(i)}\in\widehat{\mathcal{C}}_t$, 
UAB assigns a next-step expansion budget that depends only on 
the parent's uncertainty $\widehat{U}_t$:
\begin{equation}
\label{equab}
B_{t+1}
:=
\begin{cases}
\widehat{B}, & \widehat{U}_t<\lambda,\\
B, & \text{otherwise,}
\end{cases}
\end{equation}
where $0<\widehat{B}<B$ and $B$ is the expansion budget of 
the base algorithm. 

UAP and UAB act on complementary 
dimensions: UAP reduces the number of parent nodes that 
remain expandable, while UAB reduces the number of children 
generated from each retained low-uncertainty parent, 
yielding a multiplicative reduction in expansions overall. 

\begin{table*}[!t]
\centering
\setlength{\tabcolsep}{1pt}
\resizebox{\textwidth}{!}{%
\begin{tabular}{cclccccccccc}
\toprule
\multirow{2}{*}{LLM} & \multirow{2}{*}{Verifier} & \multirow{2}{*}{Method}
  & \multicolumn{3}{c}{\textbf{GSM8K}}
  & \multicolumn{3}{c}{\textbf{MATH-500}}
  & \multicolumn{3}{c}{\textbf{AIME24}} \\
\cmidrule(r){4-6} \cmidrule(r){7-9} \cmidrule(r){10-12}
& & & Acc.\,(\%) $\uparrow$ & Tokens\,($k$) $\downarrow$ & Time\,($s$) $\downarrow$
    & Acc.\,(\%) $\uparrow$ & Tokens\,($k$) $\downarrow$ & Time\,($s$) $\downarrow$
    & Acc.\,(\%) $\uparrow$ & Tokens\,($k$) $\downarrow$ & Time\,($s$) $\downarrow$ \\
\midrule
\multirow{10}{*}{\rotatebox{90}{\textit{Llama3.1-8B-Instruct}}}
& \multirow{5}{*}{Conf.}
& LiteSearch
  & 80.14 & 0.71 & 14.97
  & 54.67 & 2.06  & 38.57
  & 5.56  & 5.25  & 92.08 \\
\cmidrule(lr){3-12}
& & Beam Search
  & 89.17 & 3.12  & 53.32
  & 57.00 & 11.90 & 234.88
  & 6.67  & 35.97 & 734.44 \\
& & +UAA
  & \cellcolor{blue!5}88.00 & \cellcolor{blue!5}\textbf{2.58}  & \cellcolor{blue!5}34.23
  & \cellcolor{blue!5}56.40 & \cellcolor{blue!5}\textbf{8.27}  & \cellcolor{blue!5}126.21
  & \cellcolor{blue!5}6.67  & \cellcolor{blue!5}\textbf{20.96} & \cellcolor{blue!5}341.69 \\
\cmidrule(lr){3-12}
& & MCTS
  & 90.14 & 8.22  & 114.75
  & 61.40 & 21.65 & 290.40
  & 7.78  & 53.09 & 1261.83 \\
& & +UAA
  & \cellcolor{blue!5}90.45 & \cellcolor{blue!5}\textbf{5.43}  & \cellcolor{blue!5}74.58
  & \cellcolor{blue!5}61.40 & \cellcolor{blue!5}\textbf{11.02} & \cellcolor{blue!5}175.60
  & \cellcolor{blue!5}6.11  & \cellcolor{blue!5}\textbf{19.02} & \cellcolor{blue!5}414.60 \\
\cmidrule(lr){2-12}
& \multirow{5}{*}{PRM}
& LiteSearch
  & 85.67 & 0.73  & 731.14
  & 58.00 & 2.27  & 42.10
  & 5.56  & 6.54  & 100.55 \\
\cmidrule(lr){3-12}
& & Beam Search
  & 91.33 & 2.84  & 49.57
  & 59.33 & 8.13  & 150.59
  & 8.89  & 20.82 & 411.04 \\
& & +UAA
  & \cellcolor{blue!5}90.00 & \cellcolor{blue!5}\textbf{2.21}  & \cellcolor{blue!5}29.45
  & \cellcolor{blue!5}58.00 & \cellcolor{blue!5}\textbf{7.34}  & \cellcolor{blue!5}112.73
  & \cellcolor{blue!5}6.67  & \cellcolor{blue!5}\textbf{18.13} & \cellcolor{blue!5}299.58 \\
\cmidrule(lr){3-12}
& & MCTS
  & 92.00 & 8.94  & 123.73
  & 63.00 & 17.63 & 255.95
  & 8.89  & 33.46 & 581.19 \\
& & +UAA
  & \cellcolor{blue!5}91.33 & \cellcolor{blue!5}\textbf{6.20}  & \cellcolor{blue!5}86.57
  & \cellcolor{blue!5}65.00 & \cellcolor{blue!5}\textbf{10.93} & \cellcolor{blue!5}172.87
  & \cellcolor{blue!5}8.89  & \cellcolor{blue!5}\textbf{16.41} & \cellcolor{blue!5}369.50 \\
\midrule
\multirow{8}{*}{\rotatebox{90}{\textit{Qwen3-32B}}}
& \multirow{5}{*}{Conf.}
& LiteSearch  & 88.00 & 1.84 & 43.02  & 82.33 & 3.74 & 146.4   & 27.33 & 12.91 & 324.7 \\
  \cmidrule(lr){3-12}
& & Beam Search
  & 98.00 & 3.70  & 81.81
  & 86.67 & 10.05 & 282.77
  & 33.33 & 33.61 & 997.66 \\
& & +UAA
  & \cellcolor{blue!5}99.00 & \cellcolor{blue!5}\textbf{2.90}  & \cellcolor{blue!5}58.35
  & \cellcolor{blue!5}85.00 & \cellcolor{blue!5}\textbf{7.52}  & \cellcolor{blue!5}149.69
  & \cellcolor{blue!5}40.00 & \cellcolor{blue!5}\textbf{24.50} & \cellcolor{blue!5}520.17 \\
\cmidrule(lr){3-12}
& & MCTS
  & 98.00 & 48.45 & 983.83
  & 88.00 & 134.47 & 3029.64
  & 26.67 & 62.77  & 1322.08 \\
& & +UAA
  & \cellcolor{blue!5}99.00 & \cellcolor{blue!5}\textbf{7.01}  & \cellcolor{blue!5}277.60
  & \cellcolor{blue!5}89.00 & \cellcolor{blue!5}\textbf{14.12} & \cellcolor{blue!5}338.19
  & \cellcolor{blue!5}23.33 & \cellcolor{blue!5}\textbf{43.53} & \cellcolor{blue!5}1061.97 \\
\cmidrule(lr){2-12}
& \multirow{5}{*}{PRM}
&  LiteSearch  & 91.40 & 1.02 & 30.97  & 82.00 &4.81 & 124.72   &27.33 & 19.23 & 485.36 \\
  \cmidrule(lr){3-12} 
& & Beam Search
  & 97.50 & 3.96  & 109.68
  & 86.33 & 10.07 & 305.94
  & 33.33 & 35.68 & 1164.62 \\
& & +UAA
  & \cellcolor{blue!5}98.48 & \cellcolor{blue!5}\textbf{2.95}  & \cellcolor{blue!5}60.23
  & \cellcolor{blue!5}86.00 & \cellcolor{blue!5}\textbf{7.78}  & \cellcolor{blue!5}161.57
  & \cellcolor{blue!5}36.67 & \cellcolor{blue!5}\textbf{27.36} & \cellcolor{blue!5}640.69 \\
\cmidrule(lr){3-12}
& & MCTS
  & 98.00 & 16.89 & 543.29
  & 84.00 & 27.81 & 644.42
  & 16.00 & 66.16 & 1537.30 \\
& & +UAA
  & \cellcolor{blue!5}99.00 & \cellcolor{blue!5}\textbf{9.45}  & \cellcolor{blue!5}357.35
  & \cellcolor{blue!5}84.00 & \cellcolor{blue!5}\textbf{18.10} & \cellcolor{blue!5}481.05
  & \cellcolor{blue!5}20.00 & \cellcolor{blue!5}\textbf{45.75} & \cellcolor{blue!5}1210.97 \\
\bottomrule
\end{tabular}}
\caption{Results of Beam Search and MCTS on \textit{Llama3.1-8B-Instruct}
and \textit{Qwen3-32B}.}
\label{tab:main_results}
\end{table*}

\textbf{Uncertainty Approximation.}
Since directly computing $U(A_t \mid S_t = s_t)$ is infeasible, 
we approximate it by sampling multiple candidate actions, 
following prior work~\citep{lin2023generating}. 
Specifically, we define:
\begin{equation}
\label{eq:uncertainty_actions}
\widehat{U}_t := \binom{B_t}{2}^{-1} \sum_{i<j} \big(1 - \cos(e_i, e_j)\big),
\end{equation}
where $e_i$ is the embedding of action $a_t^{(i)}$, 
and $\cos(e_i, e_j)$ denotes the cosine similarity. 
For unit-norm embeddings, $U_t=\mathbb{E}[1-\cos(\phi(A_t),\phi(A_t'))\mid s_t]$ for two independent draws $A_t,A_t'\sim\pi_\theta(\cdot\mid s_t)$, so \Eqref{eq:uncertainty_actions} is an unbiased estimate of the step uncertainty $U_t$ in \Eqref{eq:step-uncertainty-def}: low for near-paraphrases, high for diverse candidates.
In our main experiments, we obtain each action's embedding from the feature vector of its last generated token. At each step, $\widehat{U}_t$ is min-max normalized across all candidate states before being compared to $\lambda$, so $\lambda$ is a relative threshold and a single value can transfer across datasets and model scales. When internal embeddings are unavailable (black-box setting), we can replace the embeddings with the confidence or PRM scores.

\section{Experiments}
\label{sec:experiment}

\subsection{Experiment Setup}
\label{sec:setup}
\textbf{Models and Datasets.}
We evaluate UAA primarily on \textit{Llama3.1-8B-Instruct}~\citep{grattafiori2024llama} and \textit{Qwen3-32B}~\citep{yang2025qwen3}, which represent medium-scale and stronger open-source instruction-tuned LLMs. Inference is conducted with temperature set to 0.8 for \textit{Llama3.1-8B-Instruct} and 0.7 for \textit{Qwen3-32B}; other generation parameters follow default settings.
For mathematical reasoning, we use GSM8K~\citep{cobbe2021training}, MATH-500~\citep{lightman2023let}, and AIME24.

\textbf{Methods.} We apply UAA to Beam Search and MCTS, with implementation details in Appendix~\ref{appendix:experimental_details}.
We also include LiteSearch~\citep{wang2024litesearch}, which improves efficiency via step-aware selection and reduced budgets.

\textbf{Value functions.} We evaluate two types of value functions to guide search. (1) \textit{LLM confidence}, (dubbed Conf.), computed as in \Eqref{eq:confidence} and rescaled to the \([0,1]\) range;
(2) \textit{Process reward model}, (dubbed PRM), fine-tuned from Llama3.1-8B-Instruct using RLHFlow and the Deepseek-PRM-Data~\citep{dong2024rlhf}.

\textbf{Metrics.}
We report three metrics: 
(i) \textit{Accuracy}, computed by exact match after majority voting across reasoning paths. Because UAA prunes and reallocates budget, the number of surviving paths entering the majority vote can differ between the base method and UAA;
(ii) \textit{Average tokens}, serving as a proxy for the computational cost. 
(iii) \textit{Time}, the wall-clock time required to generate the entire reasoning tree.

\begin{figure*}[!t]
    \centering
    \subfigure[Beam Search(Conf.)]{
        \includegraphics[width=0.23\linewidth,trim=20 20 20 20]{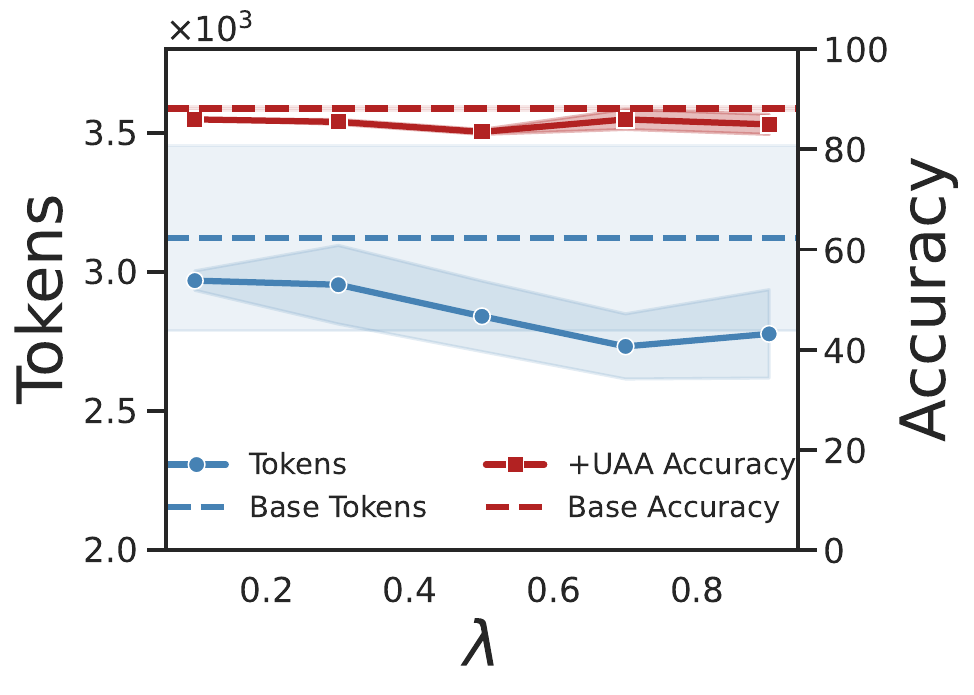}
    }
    \subfigure[Beam Search(PRM)]{
        \includegraphics[width=0.23\linewidth,trim=20 20 20 20]{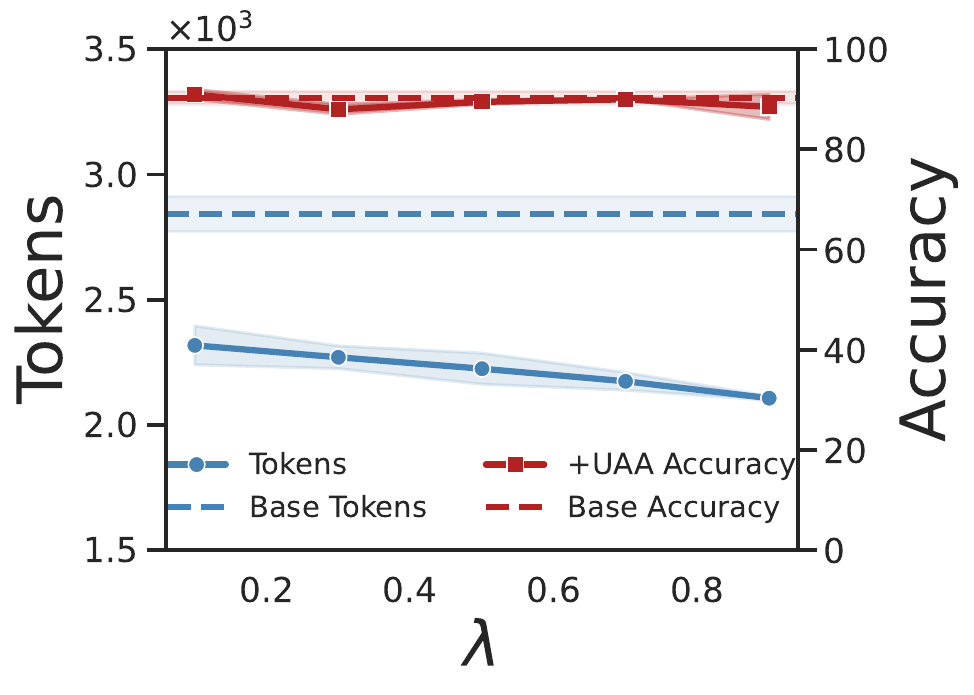}
    }
    \subfigure[MCTS(Conf.)]{
        \includegraphics[width=0.23\linewidth,trim=20 20 20 20]{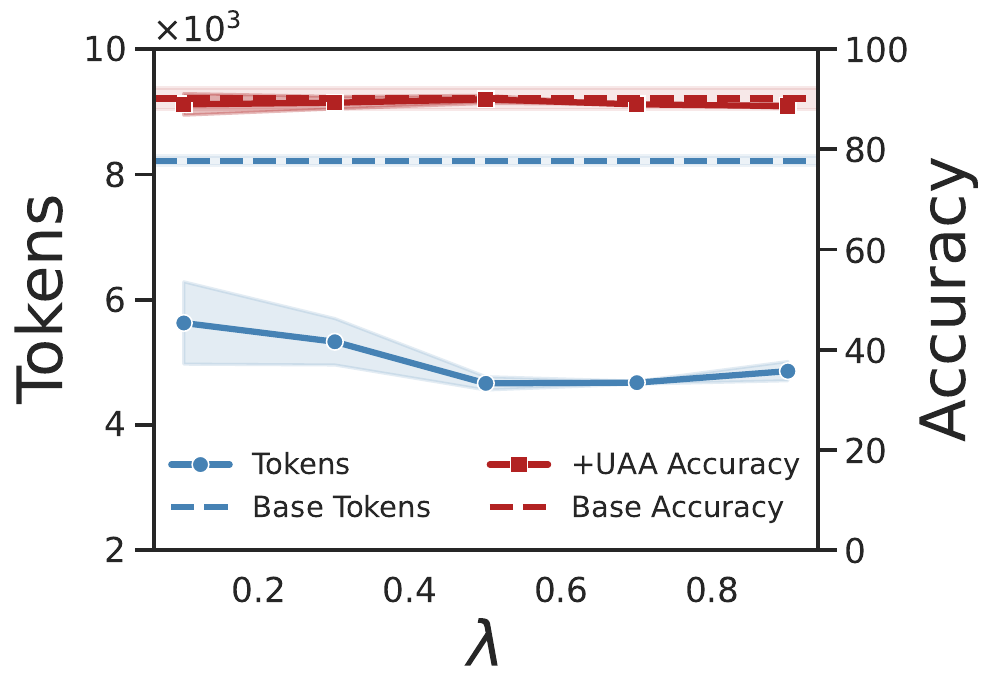}
    }
    \subfigure[MCTS(PRM)]{
        \includegraphics[width=0.23\linewidth,trim=20 20 20 20]{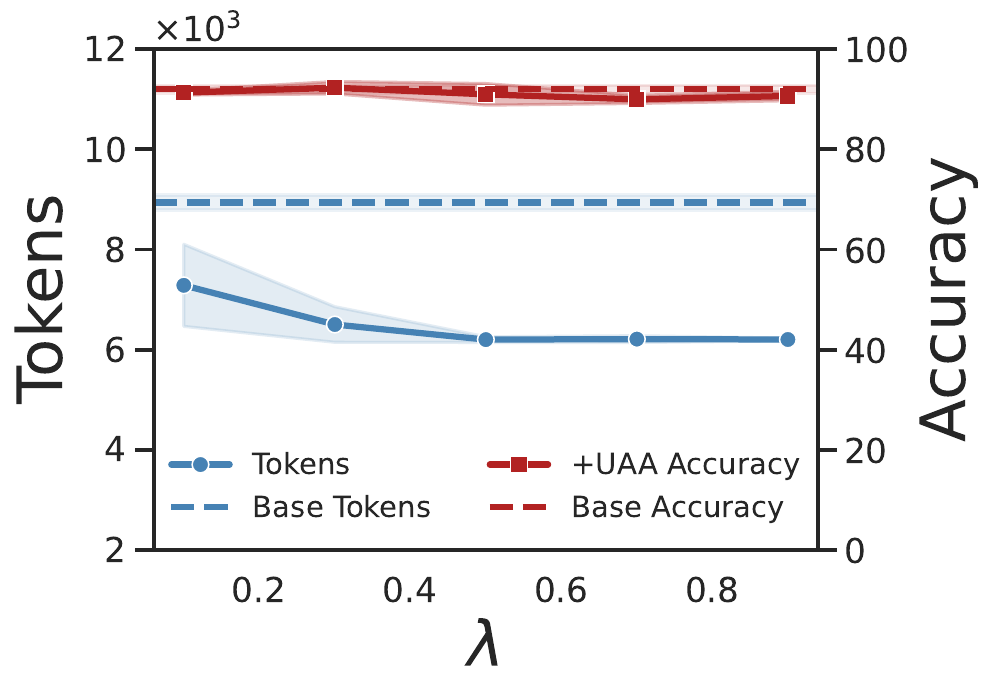}
    }
    \caption{Ablation of $\lambda$ on GSM8K. (a--b) report Beam Search, and (c--d) report MCTS, using model confidence or process reward as the value function. Blue curves show token usage and red curves show accuracy.}
    \label{fig:ablation_thresholds_gsm8k}
\end{figure*}

\subsection{Main results}

\textbf{UAA improves the reasoning efficiency of Beam Search.}
We set the number of beams to $N=16$, the tree width to $B=4$, 
and the maximum tree depth to $40$, with uncertainty threshold 
$\lambda=0.5$, $\widehat{k}=2$, and $\widehat{B}=2$.
As shown in Table~\ref{tab:main_results}, UAA consistently 
reduces token usage and inference time across all datasets 
and verifier settings, while keeping accuracy comparable to 
the base search. On GSM8K with Conf., for instance, token 
usage drops from 3.12k to 2.58k and time from 53.32s to 34.23s with comparable accuracy. UAA generalizes to the stronger \textit{Qwen3-32B} model, reducing token usage by approximately 30\% under both verifiers while slightly improving accuracy, including a +6.7\% gain on AIME24 with Conf. While LiteSearch achieves lower token usage, it consistently sacrifices accuracy, a trade-off that UAA avoids.

\textbf{UAA improves the reasoning efficiency of MCTS.}
We set the tree width to $B=4$, the maximum depth to 40, 
the number of iterations to 100, with $\lambda=0.5$, 
$\widehat{k}=2$, and $\widehat{B}=2$.
UAA consistently reduces token usage and inference time 
while keeping accuracy comparable across all benchmarks. On \textit{Llama3.1-8B-Instruct} with Conf., 
tokens drop from 8.22k to 5.43k and time from 114.75s to 
74.58s on GSM8K. On \textit{Qwen3-32B}, the gains are more 
pronounced: token consumption falls by up to 90\% on 
MATH-500 with Conf., while accuracy stays comparable across most benchmarks.

\textbf{Ablation study.}
Figure~\ref{fig:ablation_thresholds_gsm8k} shows that 
\textbf{increasing $\lambda$ monotonically reduces token usage} while 
accuracy remains essentially flat across all four settings, 
confirming a smooth efficiency--accuracy trade-off. 
Table~\ref{tab:uaa_cores} isolates the contributions of UAP 
and UAB: UAB delivers most of the token savings (2.63k with 
UAB only vs.\ 2.69k with UAP only under Conf.), while UAP 
focuses the remaining budget on high-value trajectories. 
Combining both achieves the best trade-off, with the fewest 
tokens and comparable accuracy under both verifiers. 
A depth analysis is provided in 
Appendix~\ref{appendix:depth_analysis}.


\begin{table}[!t]
\centering
\setlength{\tabcolsep}{2pt}
\resizebox{\columnwidth}{!}{
\begin{tabular}{l c c c c c}
\toprule
Verifier &UAP & UAB & Acc. ($\%$) $\uparrow$ & Tokens ($k$) $\downarrow$ & Time (s) \\
\midrule
 \multirow{4}{*}{Conf.} & \textcolor{red}{\ding{55}} & \textcolor{red}{\ding{55}} & 89.17 & 3.12 & 53.32 \\
 & \textcolor{green!70!black}{\ding{51}} & \textcolor{red}{\ding{55}} & 87.49 & 2.69 & 37.05 \\
& \textcolor{red}{\ding{55}} & \textcolor{green!70!black}{\ding{51}} & 86.42 & 2.63 & 33.69 \\
& \textcolor{green!70!black}{\ding{51}} & \textcolor{green!70!black}{\ding{51}} & 88.00 & \textbf{2.58} &  34.23 \\
\midrule
 \multirow{4}{*}{PRM} & \textcolor{red}{\ding{55}}  & \textcolor{red}{\ding{55}} & 91.33 & 2.84 & 49.57 \\
& \textcolor{green!70!black}{\ding{51}} & \textcolor{red}{\ding{55}} & 90.00 & 2.36 & 34.33 \\
& \textcolor{red}{\ding{55}} & \textcolor{green!70!black}{\ding{51}} & 89.46 & 2.22 & 29.88 \\
& \textcolor{green!70!black}{\ding{51}} & \textcolor{green!70!black}{\ding{51}} &90.00 & \textbf{2.21} &  29.45 \\
\bottomrule
\end{tabular}}
\caption{Ablation study of UAA on GSM8K.}
\label{tab:uaa_cores}
\end{table}

\begin{table}[!t]
\centering
\setlength{\tabcolsep}{1pt}
\resizebox{\columnwidth}{!}{
\begin{tabular}{llcccccc}
\toprule
\multirow{2}{*}{Dataset} & \multirow{2}{*}{Method}
  & \multicolumn{3}{c}{\textit{Llama3.1-8B-Instruct}}
  & \multicolumn{3}{c}{\textit{Qwen3-32B}} \\
\cmidrule(r){3-5} \cmidrule(r){6-8}
  & & Acc.\,(\%) $\uparrow$ & Tokens\,($k$) $\downarrow$ & Time\,($s$) $\downarrow$
    & Acc.\,(\%) $\uparrow$ & Tokens\,($k$) $\downarrow$ & Time\,($s$) $\downarrow$ \\
\midrule
\multirow{2}{*}{StrategyQA}
  & Beam Search
  & 60.67 & 3.23 & 199.97
  & 64.33 & 2.02 & 74.54 \\
  & +UAA
  & \cellcolor{blue!5}65.33 & \cellcolor{blue!5}\textbf{2.92} & \cellcolor{blue!5}195.54
  & \cellcolor{blue!5}72.00 & \cellcolor{blue!5}\textbf{1.64} & \cellcolor{blue!5}70.12 \\
\midrule
\multirow{2}{*}{Blocksworld}
  & Beam Search
  & 18.92 & 1.09 & 130.67
  & 62.16 & 0.64 & 1851.54 \\
  & +UAA
  & \cellcolor{blue!5}24.32 & \cellcolor{blue!5}\textbf{0.93} & \cellcolor{blue!5}108.11
  & \cellcolor{blue!5}67.57
  & \cellcolor{blue!5}\textbf{0.50} 
  & \cellcolor{blue!5}1450.58 \\
\bottomrule
\end{tabular}}
\caption{Results of Beam Search on \textit{Llama3.1-8B-Instruct} 
and \textit{Qwen3-32B}.}
\label{tab:main_results_strategyqa_blocksworld}
\end{table}

\begin{table*}[!t]
\centering

\setlength{\tabcolsep}{1pt}
\resizebox{\textwidth}{!}{%
\begin{tabular}{lccccccccc}
\toprule
\multirow{2}{*}{Method}
  & \multicolumn{3}{c}{\textbf{GSM8K}}
  & \multicolumn{3}{c}{\textbf{MATH-500}}
  & \multicolumn{3}{c}{\textbf{AIME24}} \\
\cmidrule(r){2-4} \cmidrule(r){5-7} \cmidrule(r){8-10}
  & Acc.\,(\%) $\uparrow$ & Tokens\,($k$) $\downarrow$ & Time\,($s$) $\downarrow$
  & Acc.\,(\%) $\uparrow$ & Tokens\,($k$) $\downarrow$ & Time\,($s$) $\downarrow$
  & Acc.\,(\%) $\uparrow$ & Tokens\,($k$) $\downarrow$ & Time\,($s$) $\downarrow$ \\
\midrule
Beam Search
  & 94.00 & 17.18 & 58.2
  & 68.00 & 37.98 & 242.4
  & 13.33 & 71.75 & 712.3 \\
+UAA
  & \cellcolor{blue!5}95.00 & \cellcolor{blue!5}\textbf{12.96} & \cellcolor{blue!5}36.5
  & \cellcolor{blue!5}67.00 & \cellcolor{blue!5}\textbf{22.76} & \cellcolor{blue!5}148.7
  & \cellcolor{blue!5}13.33 & \cellcolor{blue!5}\textbf{37.50} & \cellcolor{blue!5}402.8 \\
\bottomrule
\end{tabular}}
\caption{Results of Beam Search (PRM-guided) on \textit{GPT-4o} 
across GSM8K, MATH-500, and AIME24. }
\label{tab:main_results_gpt}
\end{table*}

\begin{table*}[t]
\centering
\setlength{\tabcolsep}{2mm}{
\resizebox{\textwidth}{!}{
\begin{tabular}{ c c|ccc|ccc|ccc}
\toprule
 Verifier & Uncertainty Type & \multicolumn{3}{c|}{GSM8K} & \multicolumn{3}{c|}{MATH-500} & \multicolumn{3}{c}{AIME24} \\
 & & Acc. ($\%$) $\uparrow$ & Tokens ($k$) $\downarrow$ & Time ($s$) $\downarrow$ & Acc. ($\%$) $\uparrow$ & Tokens  ($k$) $\downarrow$ & Time ($s$) $\downarrow$  & Acc. ($\%$) $\uparrow$ & Tokens ($k$) $\downarrow$ & Time ($s$) $\downarrow$  \\
\midrule
 \multirow{3}{*}{Conf.} & Conf. & 85.44 & 2.86 & 38.02 &  56.67 & \textbf{8.01} & 123.43& 5.56 &  \textbf{20.24} & 332.61\\
 & PRM & 87.00 & 2.63 & 36.38 & 54.00 & 8.70 & 135.46 & 6.67 &  21.61 & 360.51\\
 & Embedding & \textbf{88.00} & \textbf{2.58} &  \textbf{34.23}
& \textbf{56.40} &  8.27 &  126.21
&  6.67 & 20.96 &  341.69 \\
\cmidrule(lr){1-11}
\multirow{3}{*}{PRM} & Conf. & 86.00 &  2.73 & 37.67 & 58.33 & 8.61& 142.79 & 6.67 & 18.81 & 318.28\\
&PRM & 87.49 & 2.40& 33.97 & 57.00 &  7.99 & 132.81 & 6.67 & 20.26 & 333.34\\
& Embedding & \textbf{90.00} & \textbf{2.21} &  29.45
& 58.00 &  \textbf{7.34} & \textbf{112.73}
&  6.67  &  \textbf{18.13} & \textbf{299.58} \\ 
\bottomrule
\end{tabular}}}
\caption{Experimental results of Beam Search on \textit{Llama3.1-8B-Instruct} with different uncertainty estimators. Entries under the ``Uncertainty Type'' column include ``Conf.'' and ``PRM,'' which denote uncertainty scores based on the variance of confidence and process reward values, respectively, while ``Embedding'' is derived from the dispersion of action embeddings.}
\label{tab:uncertainty_types}
\end{table*}

\begin{table*}[!t]
\centering
\resizebox{\textwidth}{!}{
\begin{tabular}{l l c c c c c c c c c }
\toprule
\multicolumn{2}{c}{\multirow{2}{*}{Method}} & \multicolumn{3}{c}{\textbf{GSM8K}} & \multicolumn{3}{c}{\textbf{MATH-500}} & \multicolumn{3}{c}{\textbf{AIME24}} \\
\cmidrule(r){3-5} \cmidrule(r){6-8} \cmidrule(r){9-11}
 & & Acc. (\%) $\uparrow$ & Tokens ($k$) $\downarrow$ & Time ($s$) $\downarrow$
   & Acc. (\%) $\uparrow$ & Tokens ($k$) $\downarrow$ & Time ($s$) $\downarrow$
   & Acc. (\%) $\uparrow$ & Tokens ($k$) $\downarrow$ & Time ($s$) $\downarrow$ \\
\midrule
\multirow{2}{*}{DVTS} & Base  & $87.3 \pm 0.9$ & $3.4 \pm 0.2$ & $75.5 \pm 15.2$ & $60.7 \pm 1.7$ & $12.1 \pm 0.5$ & $408.1 \pm 46.0$ & $6.7 \pm 0.0$ & $31.8 \pm 2.5$ & $973.6 \pm 257.5$ \\
& +UAA  & \cellcolor{blue!5}$88.3 \pm 1.2$ & \cellcolor{blue!5}$\mathbf{1.9} \pm 0.0$ & \cellcolor{blue!5}$\mathbf{48.0} \pm 22.8$
       & \cellcolor{blue!5}$59.0 \pm 2.9$ & \cellcolor{blue!5}$\mathbf{4.9} \pm 0.1$ & \cellcolor{blue!5}$99.4 \pm 3.5$
       & \cellcolor{blue!5}$7.8 \pm 4.2$  & \cellcolor{blue!5}$\mathbf{11.3} \pm 0.9$ & \cellcolor{blue!5}$243.1 \pm 29.3$ \\
\midrule
\multirow{2}{*}{REBASE} & Base & $88.7 \pm 1.2$ & $5.1 \pm 0.0$ & $191.8 \pm 37.4$ & $58.3 \pm 1.2$ & $12.4 \pm 0.9$ & $488.1 \pm 88.1$ & $7.8 \pm 1.6$ & $27.2 \pm 1.4$ & $995.6 \pm 58.4$ \\
                        & +UAA & \cellcolor{blue!5}$88.7 \pm 1.7$ & \cellcolor{blue!5}$\mathbf{4.7} \pm 0.1$ & \cellcolor{blue!5}$153.8 \pm 14.9$
                                 & \cellcolor{blue!5}$61.0 \pm 2.9$ & \cellcolor{blue!5}$\mathbf{11.7} \pm 0.6$ & \cellcolor{blue!5}$436.4 \pm 11.4$
                                 & \cellcolor{blue!5}$8.9 \pm 1.6$  & \cellcolor{blue!5}$\mathbf{26.2} \pm 0.8$ & \cellcolor{blue!5}$790.6 \pm 109.8$ \\
\bottomrule
\end{tabular}}

\caption{Results of DVTS, REBASE, and their combinations with UAA on GSM8K, MATH-500, and AIME24 benchmarks with \textit{Llama3.1-8B-Instruct}.}
\label{tab:dvts_and_rebase}
\end{table*}

\section{Discussion}
\label{sec:discussion}
\textbf{Generalization beyond mathematical reasoning.}
We evaluate UAA on StrategyQA~\citep{geva2021did} and Blocksworld, representing logical reasoning and planning, using confidence scores to guide the search. For Blocksworld, a generated plan is counted as correct only if a simulator verifies that executing it from the initial state reaches the goal state. As shown in Table~\ref{tab:main_results_strategyqa_blocksworld}, UAA consistently reduces token consumption while improving accuracy across both datasets and both models. For example, on StrategyQA, Beam Search accuracy improves from 64.33\% to 72.00\% on \textit{Qwen3-32B} with fewer tokens 
(2.02k$\to$1.64k). These results 
confirm that \textbf{UAA generalizes effectively across diverse reasoning tasks}.

\textbf{UAA under black-box settings.}
When embeddings are unavailable, we substitute the variance of confidence or PRM scores as lightweight proxies. As shown in Table~\ref{tab:uncertainty_types}, score-variance proxies serve as practical 
fallbacks, slightly outperforming embeddings. We evaluate UAA on \textit{GPT-4o}  with PRM-variance as the uncertainty proxy. As shown in 
Table~\ref{tab:main_results_gpt}, UAA reduces token usage 
by about 40\% across all three benchmarks with comparable accuracy, confirming that UAA transfers to closed-source LLMs.

\textbf{UAA improves advanced tree search algorithms.}
We integrate UAA into DVTS~\citep{beeching2024scalingtesttimecompute} 
and REBASE~\citep{wu2024inference}, two methods that 
enhance diversity and reward-based allocation respectively.
As shown in Table~\ref{tab:dvts_and_rebase}, UAA 
consistently reduces token usage and latency while keeping accuracy comparable across all benchmarks, demonstrating that UAA is orthogonal and complementary to existing advanced search strategies.

\section{Conclusion}
\label{sec:conclusion}
We proposed Uncertainty-Aware Allocation (UAA), a simple, training-free framework that improves the efficiency of tree-search reasoning for LLMs. By pruning redundant actions and shrinking budgets at low-uncertainty steps, UAA significantly reduces token usage and inference time while maintaining accuracy across Beam Search, MCTS, and multiple benchmarks. The key insight is that sampling low-uncertainty steps with the same budget as high-uncertainty ones is redundant.
We hope the insights and analyses presented here will inspire further advances in reasoning and training strategies for LLMs. 
A promising direction for future work is to extend this idea to LLM-powered agent planning, where uncertainty-guided budgeting may further improve planning efficiency.

\section*{Limitations}
Our information-theoretic analysis motivates uncertainty-aware allocation under a Lipschitz assumption and an empirical dispersion-persistence premise. Formal guarantees for pruning and a principled choice of UAA's hyperparameters are left for future work. In addition, the best-performing uncertainty estimator requires hidden-state embeddings, so for closed-source models we fall back to score-variance proxies, which we test only on GPT-4o.

\bibliography{custom}

\clearpage

\appendix

\section{Related work}
\paragraph{Efficient tree-search reasoning of LLMs} Tree-search-based reasoning methods~\citep{yao2023tree,feng2023alphazero, wang2024q, chen2024tree} have substantially advanced the reasoning capabilities of LLMs, at a high computational cost. To reduce it, \textit{LiteSearch}~\citep{wang2024litesearch} expands children under a dynamic, step-aware budget, while \textit{FETCH}~\citep{wang2025don} and \textit{EquivPruner}~\citep{liu2025equivpruner} merge or prune semantically equivalent states with additionally trained models. Other methods use uncertainty to guide the search itself~\citep{grosse2024uncertainty,yu2025uncertainty,li2026entropy}, reallocate rollouts across reasoning directions~\citep{wang2025every}, or cut cost through speculative, parallel, or cache-efficient execution and adaptive pruning~\citep{wang2025accelerating,ding2025dynamic,hooper2025ets,xu2025phi}.
In contrast, UAA is \emph{training-free} and \emph{plug-and-play}: it sets the next-level budget from the dispersion of sibling actions, applies to both \emph{black-box} and \emph{white-box} LLMs, and can be layered on top of value-guided methods such as Beam Search, MCTS, DVTS, and REBASE without replacing their selection rules.

\paragraph{Uncertainty estimation in LLMs} 
As LLMs continue to evolve, quantifying the uncertainty in their predictions is critical for enhancing application credibility and mitigating hallucinations. 
A common strategy is to quantify uncertainty from multiple generated responses, offering a means to assess confidence.
Ensemble-based methods, such as variance, consistency, and similarity-based techniques~\citep{fadeeva2023lm, wang2022self, lin2023generating}, quantify uncertainty by analyzing disagreement across sampled outputs. Recent works also apply conformal prediction~\citep{huang2023conformal, cherian2024large, yadkori2024mitigating} to provide set-valued predictions with formal coverage guarantees. Other studies, like Uncertainty-of-Thoughts~\citep{hu2024uncertainty}, leverage uncertainty to guide information-seeking behavior during reasoning.
Inspired by predictive uncertainty quantification, we use uncertainty to allocate test-time search computation. Unlike confidence-only early stopping or value-thresholding methods, UAA keeps value estimation and uncertainty estimation separate: values select promising actions, while uncertainty controls the expansion budget spent on semantically redundant siblings.

\section{Details of pilot study}
\label{appendix:motivation}

In this section, we provide the experimental details of computing the pairwise similarity among sampled continuations. Specifically, for each state \(s_t\), we embed the \(m\) sampled continuations \(\{p^c_{s_t,i}\}_{i=1}^m\) using a pretrained text encoder to obtain vectors \(\{e_i\}_{i=1}^m\). To quantify the diversity of final answers, we simply count the number of distinct solutions among the produced answers.

\subsection{Computation of text similarity}
\label{appendix:text_similarity}
Let the candidate set of paths at decoding step $t$ be denoted as $\{p^c_{s_t,1}, p^c_{s_t,2}, \ldots, p^c_{s_t,m}\}$, where each path is represented by an embedding vector $e(p^c_{s_t,i}) \in \mathbb{R}^d$. We organize these embeddings into a matrix:
\[
E = \begin{bmatrix}
e(p^c_{s_t,1}) \\
e(p^c_{s_t,2}) \\
\vdots \\
e(p^c_{s_t,m})
\end{bmatrix} \in \mathbb{R}^{m \times d}
\]

To quantify the overall similarity among these candidate embeddings, we compute the average pairwise cosine similarity as follows:

\begin{enumerate}
  \item Normalize each embedding vector to unit length to obtain $\tilde{E} \in \mathbb{R}^{m \times d}$:
  \[
  \tilde{e}_i = \frac{e(p^c_{s_t,i})}{\|e(p^c_{s_t,i})\|_2 + \varepsilon}
  \]
  where $\varepsilon$ is a small constant for numerical stability.

  \item Compute the cosine similarity matrix:
  \[
  \mathcal{E} = \tilde{E} \tilde{E}^\top \in \mathbb{R}^{m \times m}
  \]
  where $\mathcal{E}_{ij}$ denotes the cosine similarity between embeddings $i$ and $j$.

  \item Exclude the diagonal (self-similarity) entries and compute the average of all off-diagonal elements:
  \[
  \begin{aligned}
  &\text{Sim}(p^c_{s_t,1}, p^c_{s_t,2}, \dots, p^c_{s_t, m}) \\
  &\qquad = \frac{1}{m(m-1)} \sum_{\substack{i, j = 1 \\ i \ne j}}^m \mathcal{E}_{ij}
  \end{aligned}
  \]
\end{enumerate}

This metric reflects the internal consistency among the candidate paths. A higher value indicates that the candidates are more similar to each other in the embedding space.

\subsection{Details of motivation experiment}
\label{appendix:details_motivation}

We generate reasoning trajectories using a reward-guided beam search strategy~\citep{yao2023tree}. Experiments are conducted on GSM8K benchmark~\citep{cobbe2021training}, using the LLaMA3.1-8B-Instruct model~\citep{grattafiori2024llama}. At each decoding step, 20 candidate continuations are sampled from every active node, and the top 20 are selected based on reward scores provided by the Skywork-o1-Open-PRM-Qwen-2.5-1.5B model\footnote{\url{https://huggingface.co/Skywork/Skywork-o1-Open-PRM-Qwen-2.5-1.5B}}. The beam width is fixed at 20 throughout the search.
This setup yields a wide and diverse reasoning tree, enabling fine-grained analysis of reasoning dynamics. To compute semantic uncertainty, we embed each sampled continuation using the gte-Qwen2-7B-instruct encoder~\citep{li2023towards} and estimate the embedding dispersion by one minus the semantic similarity.

\subsection{Empirical analysis between current reasoning steps and continuations }
\label{appendix:current_step}

Following the previous pilot study, we further investigate the relationship between a reasoning step and the semantic redundancy of its downstream continuations. Specifically, we examine whether the semantic similarity among sibling nodes at a given step correlates with the similarity of their respective continuations.

As shown in Figure~\ref{fig:sibling_nodes}, the distribution of Spearman correlations across multiple steps reveals a consistently positive correlation between the similarity of sibling nodes and the similarity of their continuations. This empirical finding confirms that \textbf{semantic redundancy at the current step is predictive of redundancy in downstream reasoning paths}.

These results provide further motivation for our proposed approach: \textbf{by identifying low-uncertainty steps in the search tree, we can safely reduce the search budget allocated to their continuations without sacrificing solution diversity.}

\begin{figure}[h]
    \centering
    \subfigure[]{
        \includegraphics[width=0.95\linewidth]{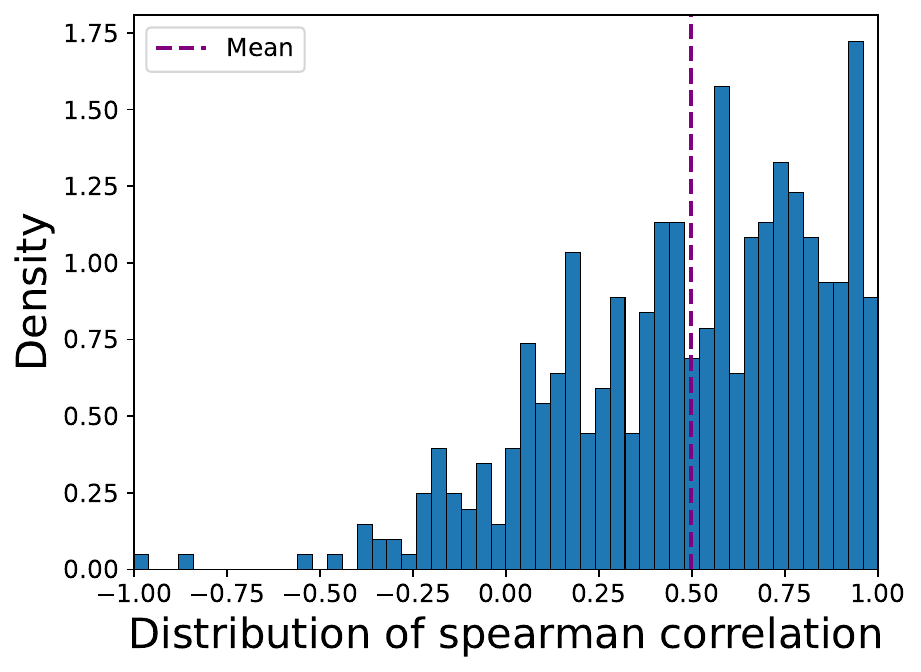}
    }
    \caption{Spearman correlation between the dispersion of continuations and the dispersion of sibling nodes.}
    \label{fig:sibling_nodes}
\end{figure}

\section{More empirical analysis}
\label{appendix:more_empirical_analysis}
In this part, we further investigate the generality of the proposed assumption across MCTS algorithms, large LLMs, and open-domain tasks, i.e, StrategyQA~\citep{geva2021did} . Experimental settings not explicitly specified here follow those in Appendix~\ref{appendix:details_motivation}.

\textbf{Empirical study on an open-domain task} In this experiment, we evaluate Beam Search on the open-domain reasoning task, i.e., StrategyQA, using \textit{LLaMA3.1-8B-Instruct} and \textit{Qwen3-32B} under the same decoding configuration. Since PRM are not applicable to StrategyQA, we adopt LLM confidence as the guidance signal during reasoning. The empirical results are shown in Figures~\ref{fig:motivation_llama_beam_strategyqa} and \ref{fig:motivation_qwen3_beam_strategyqa}. As illustrated, the reasoning trajectories across beams exhibit extremely high textual similarity, with most similarities exceeding 0.95, indicating that beam exploration collapses to nearly identical reasoning patterns. Furthermore, trajectory similarity is negatively correlated with the diversity of final answers (Spearman), indicating that highly overlapping reasoning paths converge to fewer distinct answers and thus add little answer diversity. These observations demonstrate that, on open-domain tasks like StrategyQA, expanding the search space via Beam Search offers limited semantic diversification, and the resulting trajectory consistency does not translate into stable answer outputs.

\begin{figure}[t]
    \centering
    \subfigure[Similarity of reasoning trajectories]{
        \includegraphics[width=0.4\linewidth]{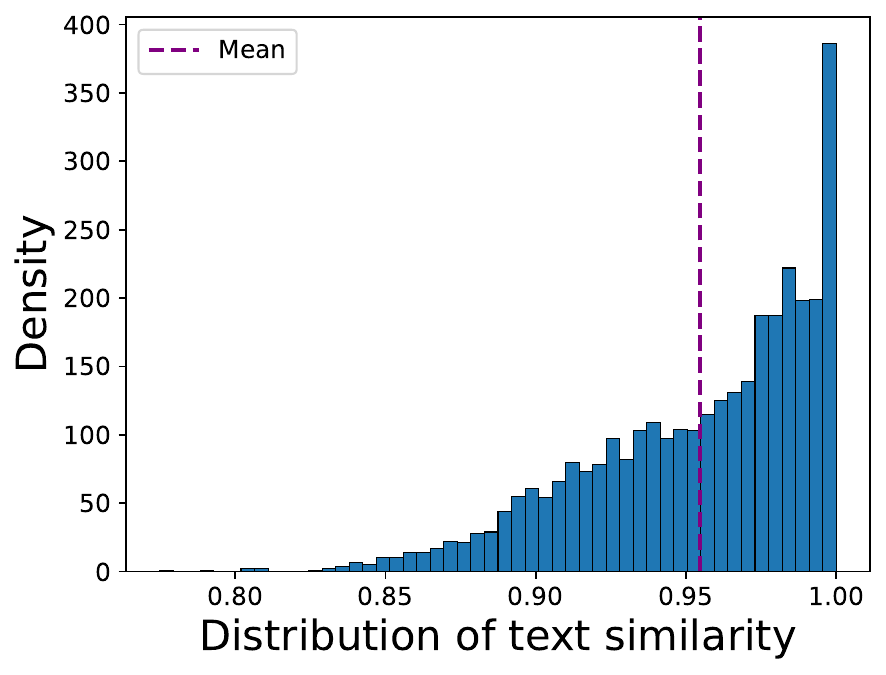}
    }
    \subfigure[Spearman correlation between similarities of reasoning trajectories and diversities of final answers]{
        \includegraphics[width=0.4\linewidth]{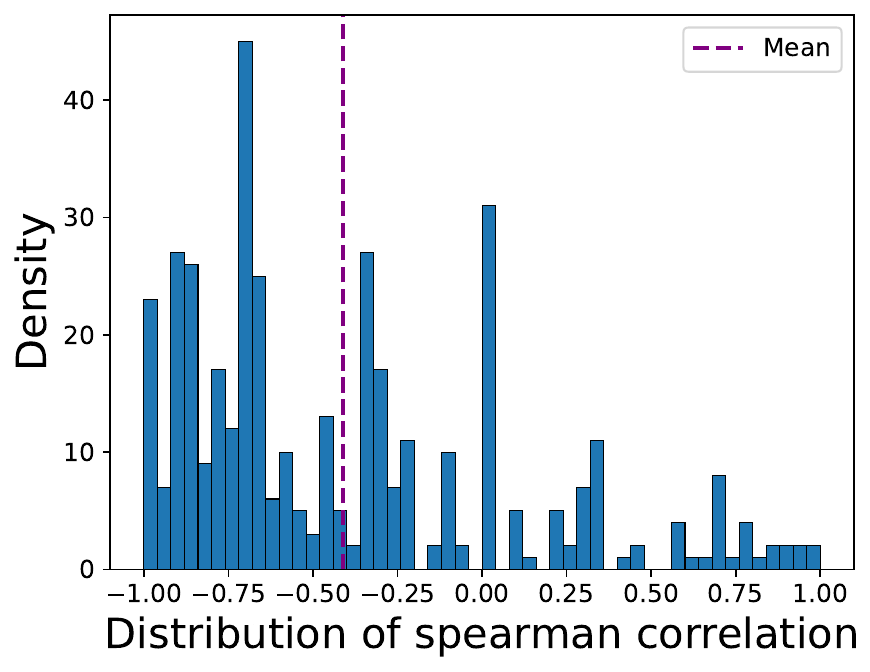}
        }
    \caption{Empirical study of Beam Search on StrategyQA Using \textit{LLaMA3.1-8B-Instruct}.}
    \label{fig:motivation_llama_beam_strategyqa}
\end{figure}

\begin{figure}[t]
    \centering
    \subfigure[Similarity of reasoning trajectories]{
        \includegraphics[width=0.4\linewidth]{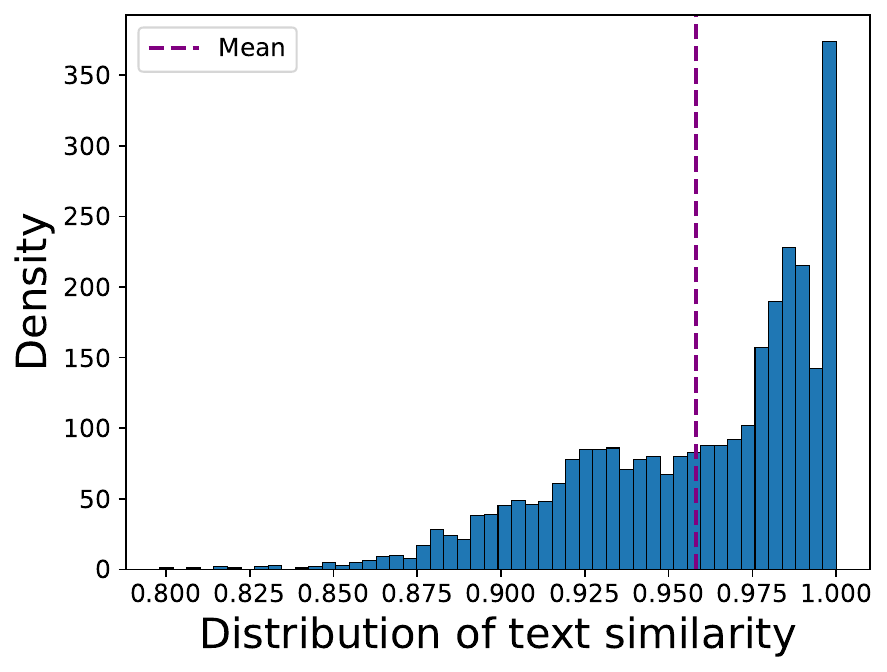}
    }
    \subfigure[Spearman correlation between similarities of reasoning trajectories and diversities of final answers]{
        \includegraphics[width=0.4\linewidth]{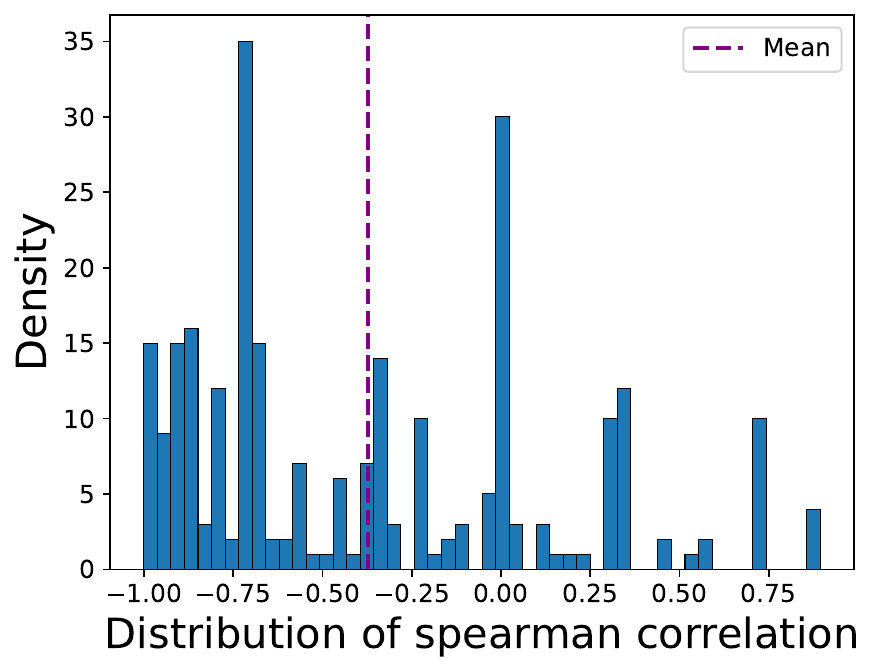}
        }
    \caption{Empirical study of Beam Search on StrategyQA Using \textit{Qwen3-32B}.}
    \label{fig:motivation_qwen3_beam_strategyqa}
\end{figure}

\textbf{Empirical study on MCTS} We conduct a pilot study of MCTS on the GSM8K dataset using the \textit{LLaMA3.1-8B-Instruct} and \textit{Qwen3-32B} models. Specifically, we set the tree width to $B=4$, the maximum depth to 40, and the number of iterations to 100. The results are presented in Figure~\ref{fig:motivation_gsm8k_mcts_llama} and \ref{fig:motivation_gsm8k_mcts_qwen3}. As shown, we observe substantial redundancy in the generated reasoning trajectories. In particular, reasoning steps that are semantically similar contribute little to expanding the coverage of diverse answer candidates.

\begin{figure}[t]
    \centering
    \subfigure[Similarity of reasoning trajectories]{
        \includegraphics[width=0.4\linewidth]{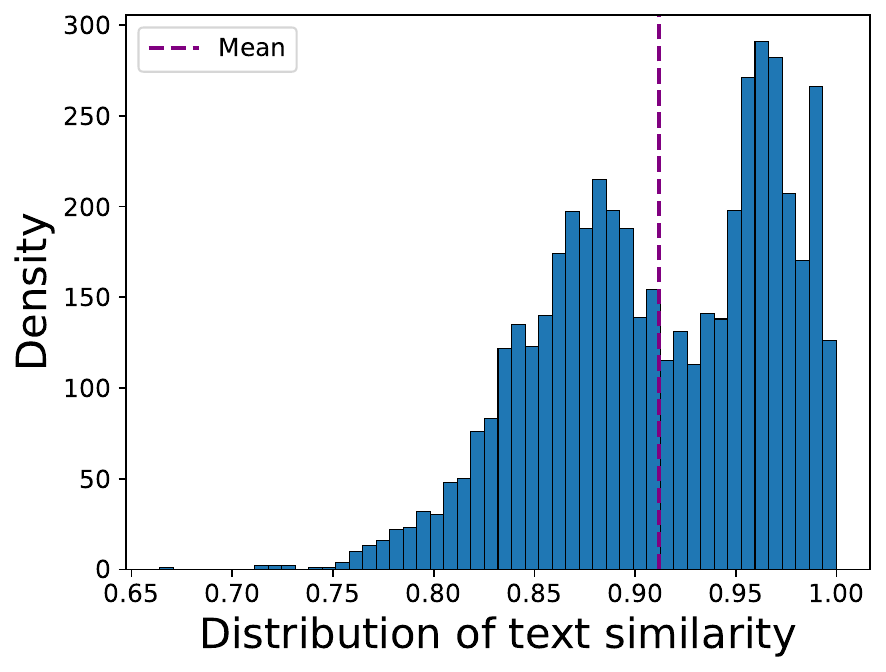}
    }
    \subfigure[Spearman correlation between similarities of reasoning trajectories and diversities of final answers]{
        \includegraphics[width=0.4\linewidth]{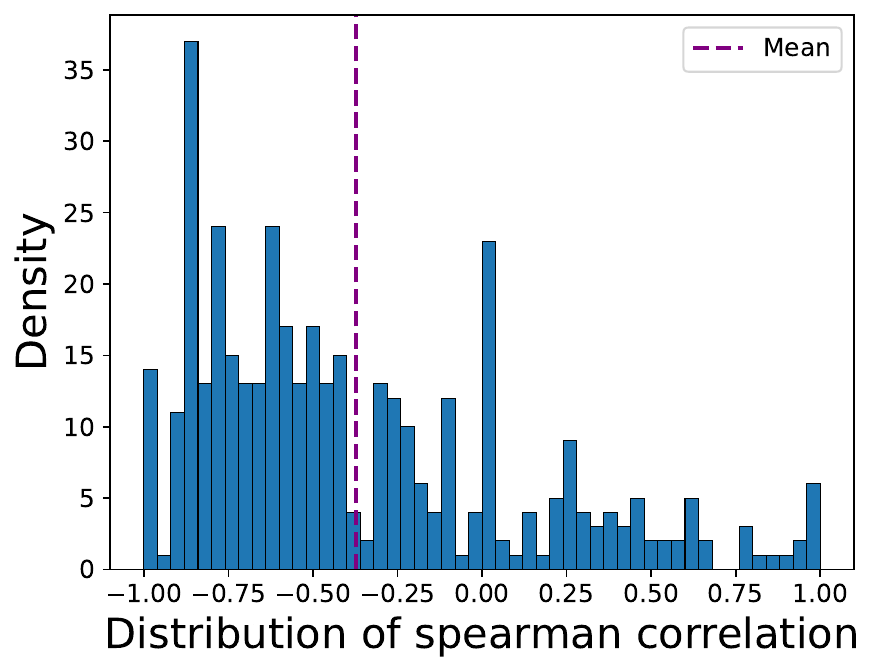}
        }
    \caption{Empirical study of MCTS on GSM8K Using \textit{LLaMA3.1-8B-Instruct}.}
    \label{fig:motivation_gsm8k_mcts_llama}
\end{figure}

\begin{figure}[t]
    \centering
    \subfigure[Similarity of reasoning trajectories]{
        \includegraphics[width=0.4\linewidth]{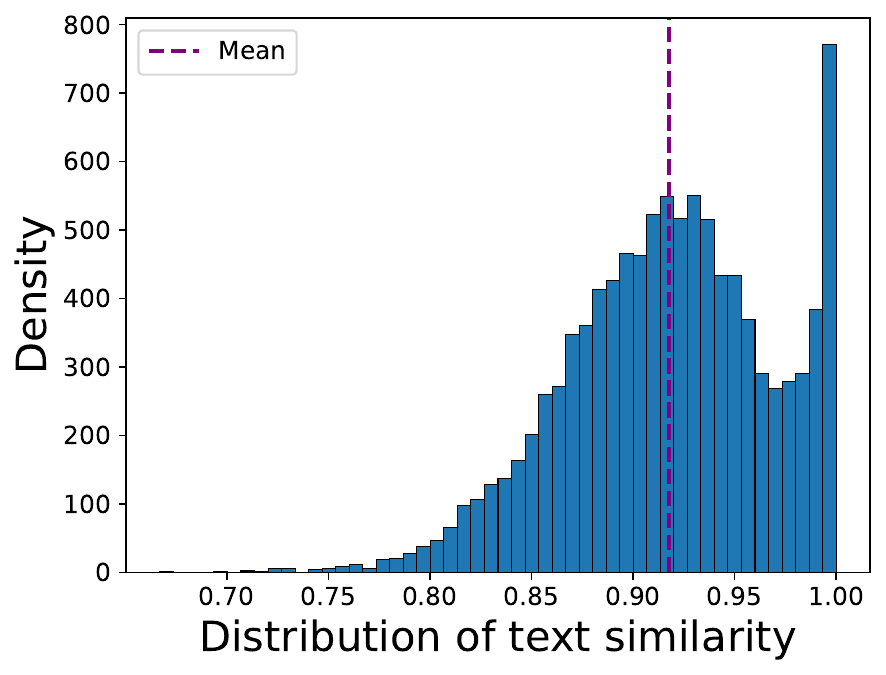}
    }
    \subfigure[Spearman correlation between similarities of reasoning trajectories and diversities of final answers]{
        \includegraphics[width=0.4\linewidth]{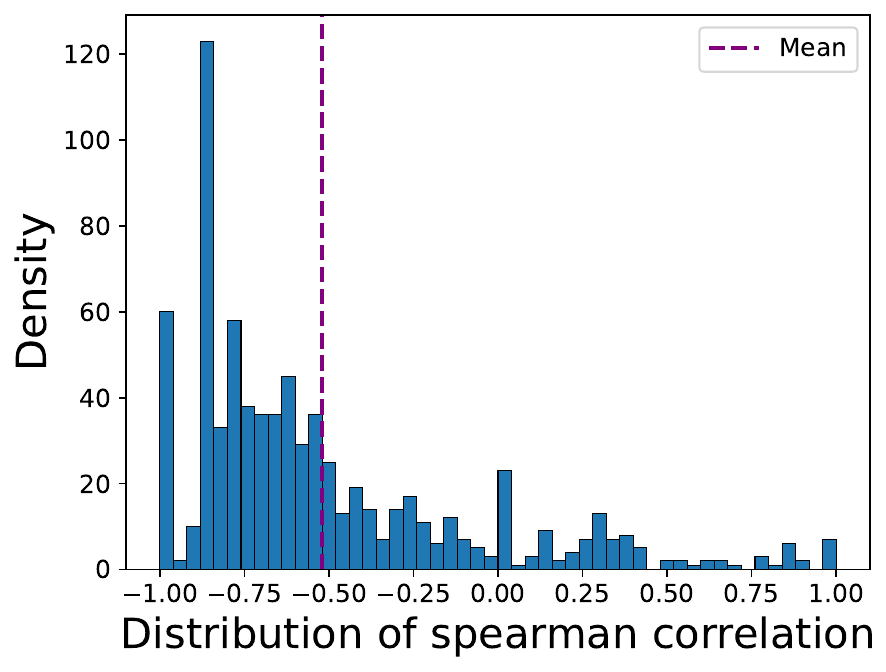}
        }
    \caption{Empirical study of MCTS on GSM8K Using \textit{Qwen3-32B}.}
    \label{fig:motivation_gsm8k_mcts_qwen3}
\end{figure}

\textbf{Empirical study for Larger LLM} We conduct a pilot investigation of both MCTS and Beam Search on GSM8K using the \textit{Qwen3-32B} model. The corresponding results are illustrated in Figure~\ref{fig:motivation_gsm8k_mcts_qwen3} and Figure~\ref{fig:motivation_gsm8k_beam_qwen3}. Across both search strategies, we observe a pronounced redundancy within the generated reasoning trajectories: the text-similarity distributions are heavily skewed toward near-identical reasoning paths, and the Spearman correlations indicate that such similarity contributes minimally to diversifying final predictions. Similar results can also be found in the empirical study of StrategyQA on Qwen3-32B, shown in Figure~\ref{fig:motivation_qwen3_beam_strategyqa}. These findings suggest that even with larger LLMs, the search process tends to collapse into highly similar reasoning branches, limiting its ability to explore meaningfully distinct solution candidates.

\begin{figure}[t]
    \centering
    \subfigure[Similarity of reasoning trajectories]{
        \includegraphics[width=0.4\linewidth]{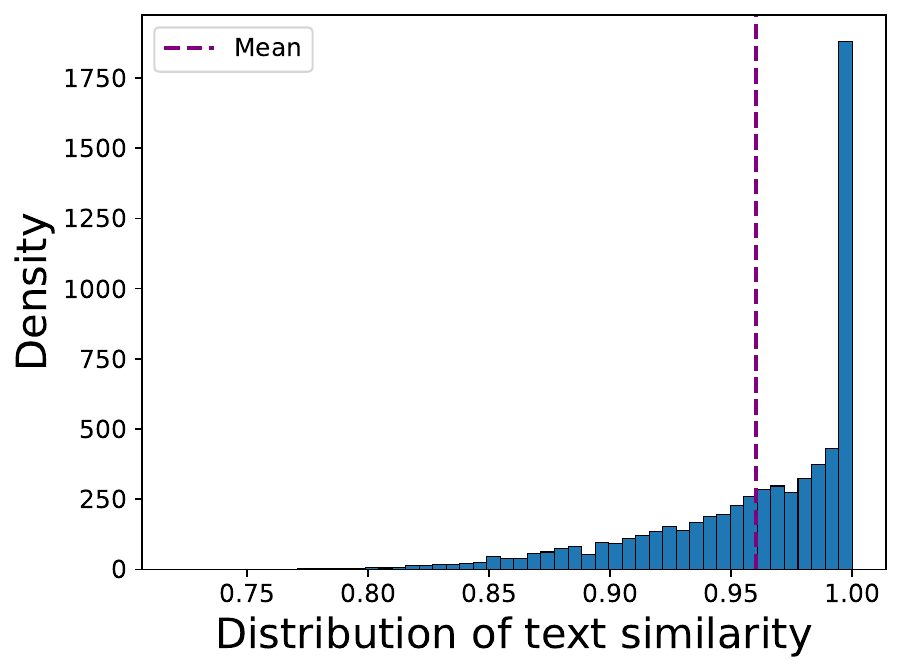}
    }
    \subfigure[Spearman correlation between similarities of reasoning trajectories and diversities of final answers]{
        \includegraphics[width=0.4\linewidth]{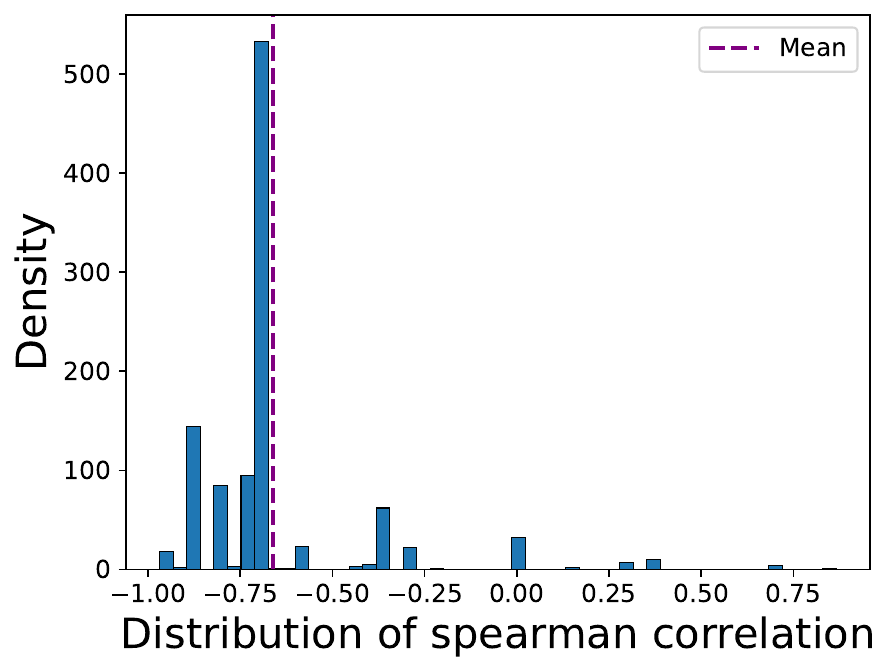}
        }
    \caption{Empirical study of Beam Search on GSM8K Using \textit{Qwen3-32B}.}
    \label{fig:motivation_gsm8k_beam_qwen3}
\end{figure}

\section{Proofs of theoretical analysis}
\label{appendix:proof}

We collect here the assumptions and the proof of  Proposition~\ref{prop:mc-bound}, and Corollary~\ref{cor:uncertainty-sample}. Throughout, $Y$ is the model's predicted final answer (Section~\ref{sec:theoretical_analysis}), random through the policy's sampling with $q$ and $s_t$ fixed; it takes values in a finite set $\mathcal{Y}$, $R:=\log|\mathcal{Y}|$, and $\phi:\mathcal{A}\to\mathbb{R}^d$ is the fixed unit-norm embedding of \Eqref{eq:step-uncertainty-def}.

\paragraph{Assumptions.}
\begin{itemize}
\item[(A1)] \emph{Finite answer space.} $\mathcal{Y}$ is finite, so $h(a)=H(Y\mid A_t=a,s_t)\in[0,R]$ with $R=\log|\mathcal{Y}|$.
\item[(A2)] \emph{I.i.d.\ expansions.} The candidate actions $\{a_t^{(i)}\}_{i=1}^n$ at state $s_t$ are drawn i.i.d.\ from the policy $\pi_\theta(\cdot\mid s_t)$.
\item[(A3)] \emph{Unit-norm embedding.} $\|\phi(a)\|=1$ for every $a\in\mathcal{A}$, and $U_t:=\Var(\phi(A_t)\mid s_t)$ is the step uncertainty of \Eqref{eq:step-uncertainty-def}.
\item[(A4)] \emph{Lipschitz residual entropy (Assumption~\ref{ass:lipschitz}).} $h(a)=\tilde h(\phi(a))$ with $\tilde h:\mathbb{R}^d\to[0,R]$ that is $L$-Lipschitz in Euclidean distance.
\end{itemize}

\paragraph{Bernstein's inequality (bounded i.i.d.).}
We use the following standard form. Let $Z_1,\dots,Z_n$ be i.i.d.\ with $|Z_i-\mathbb{E}Z_i|\le M$ almost surely and $\Var(Z_i)=\sigma^2$, and let $\bar Z_n=\tfrac1n\sum_i Z_i$. Then for every $\varepsilon>0$,
\begin{equation}
\label{eq:bernstein-classical}
\mathbb{P}\big(|\bar Z_n-\mathbb{E}Z_1|\ge\varepsilon\big)\;\le\;2\exp\!\left(-\frac{n\varepsilon^2}{2\sigma^2+\tfrac{2}{3}M\varepsilon}\right).
\end{equation}

\subsection{Proof of Proposition~\ref{prop:mc-bound}}

\begin{proof}
Set $Z_i:=h(a_t^{(i)})$. By (A2) the $Z_i$ are i.i.d., and by (A1) they lie in $[0,R]$, so $\mathbb{E}Z_1=H(Y\mid A_t,s_t)$ and $|Z_i-\mathbb{E}Z_1|\le R$; thus we may take $M=R$ in \eqref{eq:bernstein-classical}. Writing $\sigma^2:=\Var(h(A_t)\mid s_t)=\Var(Z_1)$ and noting $\bar Z_n=\widehat{H}_n$, \eqref{eq:bernstein-classical} gives exactly
\[
\begin{aligned}
&\mathbb{P}\big(|\widehat{H}_n-H(Y\mid A_t,s_t)|\ge\varepsilon\big) \\
&\qquad \le 2\exp\!\left(-\frac{n\varepsilon^2}{2\sigma^2+\tfrac{2}{3}R\varepsilon}\right).
\end{aligned}
\]
Finally, since $Z_1\in[0,R]$, Popoviciu's inequality on the variance of a bounded random variable gives $\sigma^2\le (R-0)^2/4=R^2/4$.
\end{proof}

\begin{proof}[Proof of Corollary~\ref{cor:uncertainty-sample}]
By Proposition~\ref{prop:mc-bound} it suffices to choose $n$ so that the right-hand side of \eqref{eq:bernstein} is at most $\delta$, i.e.
\(
2\exp\!\big(-n\varepsilon^2/(2\sigma^2+\tfrac23 R\varepsilon)\big)\le\delta.
\)
Taking logarithms and rearranging,
\[
n\;\ge\;\frac{2\sigma^2+\tfrac23 R\varepsilon}{\varepsilon^2}\,\ln\frac{2}{\delta}
\;=\;\left(\frac{2\sigma^2}{\varepsilon^2}+\frac{2R}{3\varepsilon}\right)\ln\frac{2}{\delta}.
\]
Since $\sigma^2\le R^2/4$ by Proposition~\ref{prop:mc-bound} and $\sigma^2\le L^2U_t$ under Assumption~\ref{ass:lipschitz} (shown in the remark below), every $n$ satisfying \eqref{eq:sufficient-n} satisfies this condition. Letting $U_t\to0$ leaves the residual budget $n\ge\tfrac{2R}{3\varepsilon}\ln\tfrac2\delta$.
\end{proof}

\begin{remark}[Variance contraction under a Lipschitz map]
We give the elementary i.i.d.-copy argument behind $\sigma^2\le L^2U_t$. Let $X:=\phi(A_t)$ and let $X'$ be an independent copy of $X$ (both conditioned on $s_t$). For any function $g$, variance admits the i.i.d.-copy representation
\[
\Var(g(X))=\tfrac12\,\mathbb{E}\big[(g(X)-g(X'))^2\big].
\]
Apply this with $g=\tilde h$. By (A4), $|\tilde h(X)-\tilde h(X')|\le L\|X-X'\|$, hence
\begin{align*}
    \Var\!\big(\tilde h(X)\big)&=\tfrac12\,\mathbb{E}\big[(\tilde h(X)-\tilde h(X'))^2\big] \\
\\ &\le \tfrac12 L^2\,\mathbb{E}\big[\|X-X'\|^2\big]
\\&= L^2\,\Var(X),
\end{align*}
where the last equality again uses the i.i.d.-copy identity (coordinatewise) for $X$. Since $\tilde h(X)=h(A_t)$ and $X=\phi(A_t)$, this reads $\sigma^2=\Var(h(A_t)\mid s_t)\le L^2\,\Var(\phi(A_t)\mid s_t)=L^2 U_t$.
\end{remark}

\begin{assumption}[Dispersion persistence]\label{ass:persistence}
Let $U_{t+1}:=\Var(\phi(A_{t+1})\mid s_{t+1})$. There exist $\rho,\eta\ge 0$ such that $U_{t+1}\le\rho\,U_t+\eta$ for every child state $s_{t+1}=[s_t,a_t]$.
\end{assumption}

\begin{remark}[Scope of the analysis]\label{rem:scope}
\emph{(i) From a step to its children.} Corollary~\ref{cor:uncertainty-sample} concerns actions sampled at $s_t$, whereas UAB (\Eqref{equab}) sets the budget of the child states. If Assumption~\ref{ass:lipschitz} holds at $s_{t+1}$ with the same $L$, the argument above gives $\sigma_{t+1}^2\le L^2U_{t+1}$, and Assumption~\ref{ass:persistence} then yields $\sigma_{t+1}^2\le L^2(\rho\,U_t+\eta)$: the sufficient budget at a child shrinks with the parent's dispersion $U_t$, the quantity UAB thresholds. Assumption~\ref{ass:persistence} is an empirical premise rather than a provable property; its support is the positive average correlation between sibling similarity and continuation similarity in Appendix~\ref{appendix:current_step}. Since $U_{t+1}\le 1$ for unit-norm embeddings, the assumption holds trivially with $\eta=1$; it is informative only when $\eta$ is small enough that $\rho\,U_t+\eta<R^2/(4L^2)$, and we do not estimate $\rho$ and $\eta$ here.

\emph{(ii) Size of $L$.} Combining the bound above with Proposition~\ref{prop:mc-bound} gives $\sigma^2\le\min\{L^2U_t,\,R^2/4\}$, so low dispersion reduces the sufficient budget only when $U_t<R^2/(4L^2)$. Assumption~\ref{ass:lipschitz} may fail, or hold only with a large $L$, for steps that differ in a single digit or sign yet lead to different answers. In that case the bound reverts to the worst-case budget, and budget reductions at such steps are not covered by the analysis.

\emph{(iii) Pruning and hyperparameters.} The analysis does not bound the probability that UAP discards a correct branch. Assumption~\ref{ass:lipschitz} only implies $|h(a)-h(a')|\le L\|\phi(a)-\phi(a')\|$, i.e., near-duplicate siblings have nearly equal residual entropy. Likewise, UAA is a two-level discretization of the budget in \Eqref{eq:sufficient-n}; since $L$ is unknown, the analysis does not determine $\lambda$, $\widehat{k}$, or $\widehat{B}$, which we set empirically (Figures~\ref{fig:ablation_thresholds_gsm8k} and~\ref{fig:parameters_sensitivity}).

\emph{(iv) Link to correctness.} $Y$ is the model's predicted answer, so $H(Y\mid A_t,s_t)$ measures the model's own uncertainty rather than correctness. Proposition~\ref{prop:mc-bound} uses only that $h$ is bounded: it holds verbatim with $h$ replaced by the success probability $g(a):=\mathbb{P}(Y=y^\star\mid A_t=a,s_t)\in[0,1]$, where $y^\star$ is the ground-truth answer, and $R=1$. In that case $\widehat{H}_n$ estimates $\mathbb{P}(Y=y^\star\mid s_t)$, the probability that $s_t$ leads to the correct answer, and the same budget argument applies if $g$ satisfies Assumption~\ref{ass:lipschitz}.
\end{remark}

\section{Experimental details}
\label{appendix:experimental_details}

\subsection{Details of tree search algorithms}

\subsubsection{Beam search}
Beam search optimizes the value of reasoning trajectories by searching over its per-step predictions. Our implementation is similar to value-guided Breadth First Search~\citep{feng2023alphazero, yao2023tree, snell2024scaling, beeching2024scalingtesttimecompute}. Concretely, we consider a fixed number of beams $N$ and a beam width $B$. We then run the following steps:

\begin{enumerate}
    \item Sample $N$ initial predictions for the first step in the solution.
    \item Score the generated steps according to the PRM's predicted step-wise reward-to-go estimate (which also corresponds to the total reward from the prefix since the reward is sparse in this setting).
    \item Filter for only the top $\frac{N}{B}$ highest scoring steps.
    \item From each candidate, sample $B$ proposals from the next step, resulting in a total of $\frac{N}{B} \times B$ candidate prefixes again. Then repeat steps 2--4.
\end{enumerate}

\paragraph{Incorporating UAA into Beam Search.}  
We integrate UAA into the above process by modifying Steps 3 and 4. In Step 3, \textit{Uncertainty-Aware Pruning} (UAP) removes redundant candidates based on their uncertainty estimates. In Step 4, \textit{Uncertainty-Aware Budgeting} (UAB) allocates the expansion budget across the retained candidates.  
Moreover, to enhance the stability of UAA across different datasets and scales, we apply min-max normalization to the estimated uncertainty values during each expansion step. Specifically, following the notation in Section~\ref{sec:preliminaries}, let \(\{s_t^{(l)}\}_{l=1}^L\) be the set of candidate states at step \(t\). We compute their corresponding uncertainty estimates 
\begin{align}
&\widehat{U}^\prime(A_t \mid S_t = s_t^{(l)}) = \nonumber \\
&\resizebox{0.86\columnwidth}{!}{$\displaystyle
\frac{
\widehat{U}(A_t \mid S_t = s_t^{(l)}) - \min\limits_l \widehat{U}(A_t \mid S_t = s_t^{(l)})
}{
\max\limits_l \widehat{U}(A_t \mid S_t = s_t^{(l)}) - \min\limits_l \widehat{U}(A_t \mid S_t = s_t^{(l)})
}.
$}
\end{align}

\subsubsection{Monte Carlo Tree Search (MCTS)}
MCTS~\citep{feng2023alphazero,wang2024q} is a best-first search algorithm that incrementally builds a search tree by balancing exploration and exploitation through Monte Carlo simulations. It consists of four main stages:

\begin{enumerate}
    \item \textbf{Selection:} Starting from the root node, the algorithm selects child nodes based on the Upper Confidence Bound (UCB) criterion:
\[
\resizebox{\linewidth}{!}{$\displaystyle
s_{t+1} = \arg\max_{s_j \in \text{Children}(s_t)} \left[ V(s_j) + c \cdot \sqrt{ \frac{\log N(s_t)}{1 + N(s_j)} } \right],
$}
\]
where \(V(s_j)\) is the value of node \(s_j\), \(N(s_t)\) is the visit count of the parent node, and \(c\) is a constant balancing exploration and exploitation. This selection continues until a leaf node is reached. 

    \item \textbf{Expansion:} If the selected leaf node is non-terminal (i.e., no final answer has been generated), it is expanded by decoding one step using the policy model to generate \(B\) child nodes.

    \item \textbf{Simulation:} From each expanded node \(s_t\), the policy model performs \(B\) rollouts to generate complete reasoning trajectories, which are then scored by a reward model. The initial value of \(s_c\) is given by a value function.
    \item \textbf{Backpropagation:} Once the child nodes are evaluated, their values are propagated up the tree. The parent node \(s_t\) updates its value and visit count as follows:
\[
\begin{aligned}
V(s_t) &\leftarrow \frac{N(s_t) \cdot V(s_t) + \sum_{c=1}^B V(s_c)}{N(s_t) + B}, \\
N(s_t) &\leftarrow N(s_t) + 1.
\end{aligned}
\]
\end{enumerate}

\paragraph{Incorporating UAA into MCTS.}  
We integrate UAA into MCTS exactly as defined in Section~\ref{sec:method} (\Eqref{equab}); the value function $V$ and the UCB selection rule are left unchanged. In the Expansion stage, after the policy model generates the candidate actions of a non-terminal leaf, \textit{Uncertainty-Aware Pruning} (UAP) computes the sibling uncertainty $U_t$ (\Eqref{eq:uncertainty_actions}): if $U_t<\lambda$, only the top-$\widehat{k}$ candidates ranked by the original value $V(s_t,a_t^{(i)})$ are retained as expandable children and the remaining low-ranked siblings are dropped; otherwise all candidates are kept. \textit{Uncertainty-Aware Budgeting} (UAB) then sets the number of children to decode for each retained node by the same uncertainty gate: $\widehat{B}$ children when $U_t<\lambda$ and the default $B$ otherwise. Uncertainty thus modulates only the branching factor, never the value ranking or the UCB selection.  

\subsubsection{Diverse Verifier Tree Search (DVTS)}  
DVTS~\citep{beeching2024scalingtesttimecompute} is an extension of beam search designed to maximize diversity at large test-time compute budgets. It follows a similar procedure to beam search, with the following modifications:

\begin{enumerate}
    \item For a given budget \(N\) and beam width \(M\), expand the initial set of beams into \(K\) independent subtrees.
    \item Within each subtree, select the step with the highest score.
    \item From the selected nodes, generate \(M\) new steps and again retain the one with the highest PRM score in each subtree.
    \item Repeat Step~(3) until the EOS token is generated or the maximum tree depth is reached.
\end{enumerate}

\paragraph{Incorporating UAA into DVTS.}  
We incorporate UAA into DVTS by applying the rule of Section~\ref{sec:method} (\Eqref{equab}) independently within each subtree, leaving the per-subtree PRM selection unchanged. In Step~2, \textit{Uncertainty-Aware Pruning} (UAP) computes the sibling uncertainty $U_t$ (\Eqref{eq:uncertainty_actions}) among the candidates of the selected node: when $U_t<\lambda$ (i.e., the siblings are semantically redundant), only the top-$\widehat{k}$ candidates ranked by the original PRM value are kept eligible and the remaining low-ranked siblings are pruned; otherwise all candidates are retained to preserve diversity. In Step~3, \textit{Uncertainty-Aware Budgeting} (UAB) controls the expansion width without enlarging the overall budget: high-uncertainty nodes ($U_t\ge\lambda$) keep the default allocation of $M$ expansions, whereas low-uncertainty nodes ($U_t<\lambda$) use the reduced budget $\widehat{B}<M$.

\subsubsection{Reward Balanced Search (REBASE)}  
REBASE~\citep{wu2024inference} is a tree search algorithm that balances exploration based on reward scores without expensive rollouts. The procedure is as follows:

\begin{enumerate}
    \item \textbf{Initialization:} Given a question \(x\), balance temperature \(T_b > 0\), and target number of solutions \(N\), sample \(N\) initial steps from the policy model to form the first layer of nodes. Set the budget at depth 0 to \(B_0 = N\).
    \item \textbf{Reward assignment:} At depth \(i\), assign rewards \(R(n_j)\) to each node \(n_j\). Count completed solutions \(C_i\), and update the budget:
    \[
    B_i \leftarrow B_{i-1} - C_i.
    \]
    If \(B_i = 0\), terminate and return the \(N\) collected solutions.
    \item \textbf{Balanced expansion:} For each node \(n_j\) at depth \(i\), compute its expansion width:
    \[
    W_j = \text{Round}\!\left(\frac{B_i \exp(R(n_j)/T_b)}{\sum_k \exp(R(n_k)/T_b)}\right).
    \]
    Expand \(W_j\) children for node \(n_j\) by sampling from the policy model.
    \item \textbf{Iteration:} Repeat steps (2)--(3) until all \(N\) solutions are generated or the maximum tree depth is reached.
\end{enumerate}

\paragraph{Incorporating UAA into REBASE.}  
We integrate UAA into REBASE following the rule of Section~\ref{sec:method} (\Eqref{equab}); the reward scores $R(n_j)$ that drive the balanced allocation are left unchanged. Before the reward-based budget of Step~3 is computed, \textit{Uncertainty-Aware Pruning} (UAP) evaluates the sibling uncertainty $U_t$ (\Eqref{eq:uncertainty_actions}) at each parent: when $U_t<\lambda$, only the top-$\widehat{k}$ nodes ranked by the original reward $R(\cdot)$ remain eligible for expansion and the redundant low-ranked siblings are pruned; otherwise all nodes are retained. \textit{Uncertainty-Aware Budgeting} (UAB) then caps the total expansion budget at each parent before it is distributed by the REBASE reward-temperature weighting in Step~3: low-uncertainty parents ($U_t<\lambda$) use the reduced budget $\widehat{B}$ in place of $B_i$, while high-uncertainty parents keep $B_i$; the relative widths $W_j$ across children are still set by the unchanged reward-temperature weighting. Thus, uncertainty rescales only the per-parent budget, not the reward ranking.

\section{Depth analysis of the search trees}
\label{appendix:depth_analysis}

In this section, we present a reasoning-depth analysis for both Beam Search and MCTS, based on the experiments reported in Section~\ref{sec:experiment}. As shown in Table~\ref{tab:depth_analysis}, integrating UAA into Beam Search and MCTS consistently preserves the effective reasoning depth of the search tree. This demonstrates that UAA reduces token consumption without prematurely truncating the exploration process. Rather than discarding meaningful reasoning steps, UAA removes only redundant expansions while keeping the valid reasoning trajectory fully extendable. Consequently, UAA achieves substantially lower token usage while maintaining comparable, and occasionally improved, accuracy across GSM8K, MATH-500, and AIME24. These results confirm that \textbf{UAA enhances efficiency while preserving the reasoning depth of the search tree.}

\begin{table*}[!t]
\centering
\resizebox{0.8\textwidth}{!}{
\begin{tabular}{clcccccc}
\toprule
\multirow{2}{*}{Verifier} & \multirow{2}{*}{Method} & \multicolumn{3}{c}{\textit{Llama3.1-8B-Instruct }} & \multicolumn{3}{c}{\textit{Qwen3-32B}} \\
\cmidrule(r){3-5} \cmidrule(r){6-8}
& & \textbf{GSM8K} &  \textbf{MATH-500}&  \textbf{AIME24} & \textbf{GSM8K} &  \textbf{MATH-500}&  \textbf{AIME24} \\
\midrule
 \multirow{4}{*}{\centering Conf.}
&  Beam Search 
&   5.12  & 9.46 & 17.29
&   13.73 & 23.84 & 38.22\\
&  +UAA 
&  5.22 & 9.01 &  17.14
&  20.69 & 27.38 & 37.98 \\
\cmidrule(lr){2-8}
& MCTS &12.45 & 22.75 & 31.74 &14.25 &25.46 & 39.15 \\
& +UAA & 15.25 & 20.64 & 32.43 & 15.86 & 25.27 & 39.68 \\
\cmidrule(lr){1-8}
\multirow{4}{*}{\centering PRM} &  Beam Search 
& 4.87  & 8.64 & 14.87 &15.24 & 23.83 & 38.28\\
& +UAA 
&  4.93 & 8.86 & 16.02 & 20.40 & 24.80 & 37.70 \\
\cmidrule(lr){2-8}
& MCTS  & 9.51 & 19.37 & 32.91 & 17.25 & 26.5& 39.94 \\
& +UAA & 9.63 & 16.29 & 35.46 &  18.41 &25.49 & 38.12 \\
\bottomrule
\end{tabular}}
\caption{Depth analysis of Beam Search and MCTS on \textit{Llama3.1-8B-Instruct } and \textit{Qwen3-32B} across GSM8K, MATH-500, and AIME24.  }
\label{tab:depth_analysis}
\end{table*}

\begin{figure*}[!t]
    \centering
    \subfigure[Conf.\ --- Acc.\ Gap]{
        \includegraphics[width=0.3\linewidth]{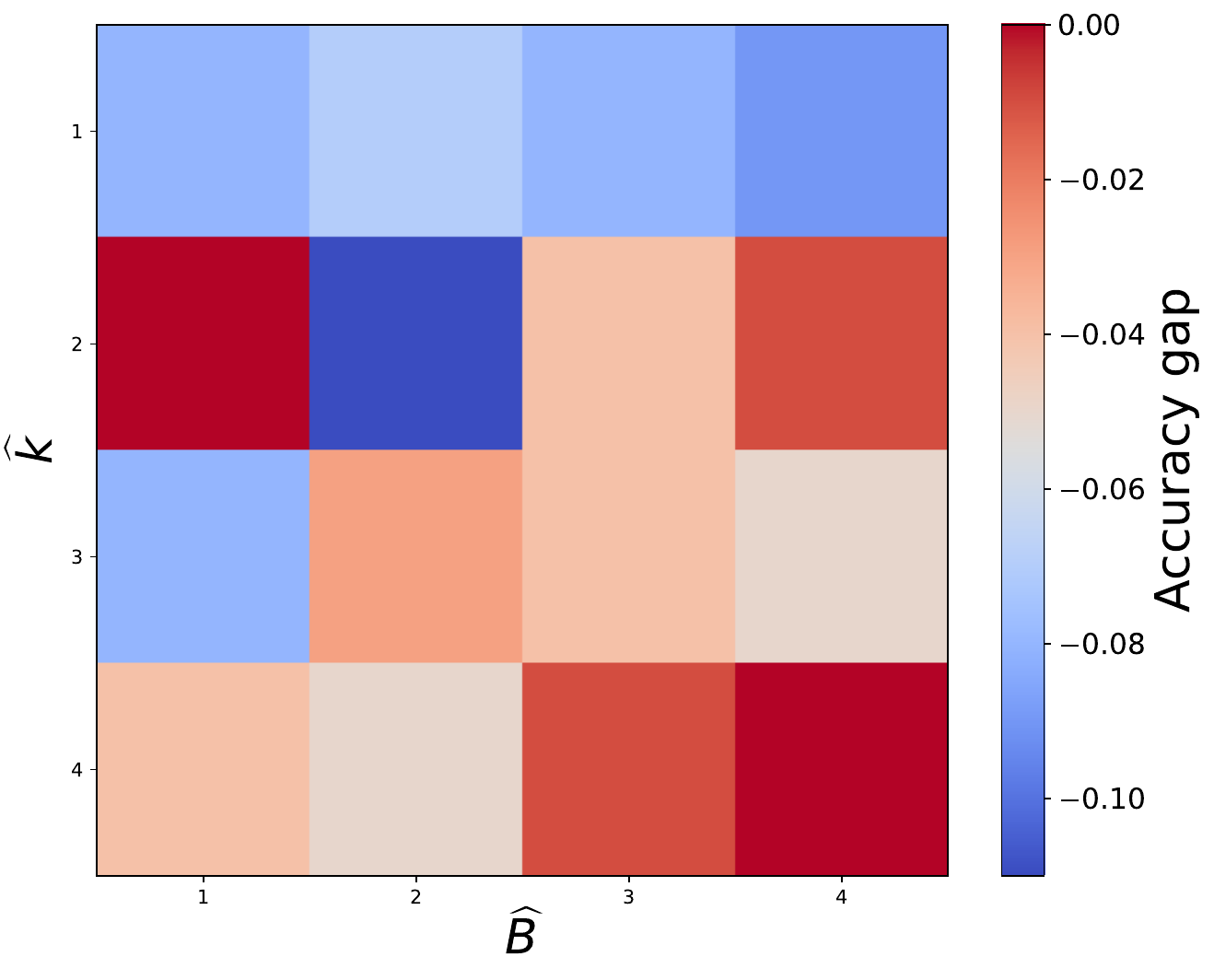}}
    \subfigure[Conf.\ --- Token Gap]{
        \includegraphics[width=0.3\linewidth]{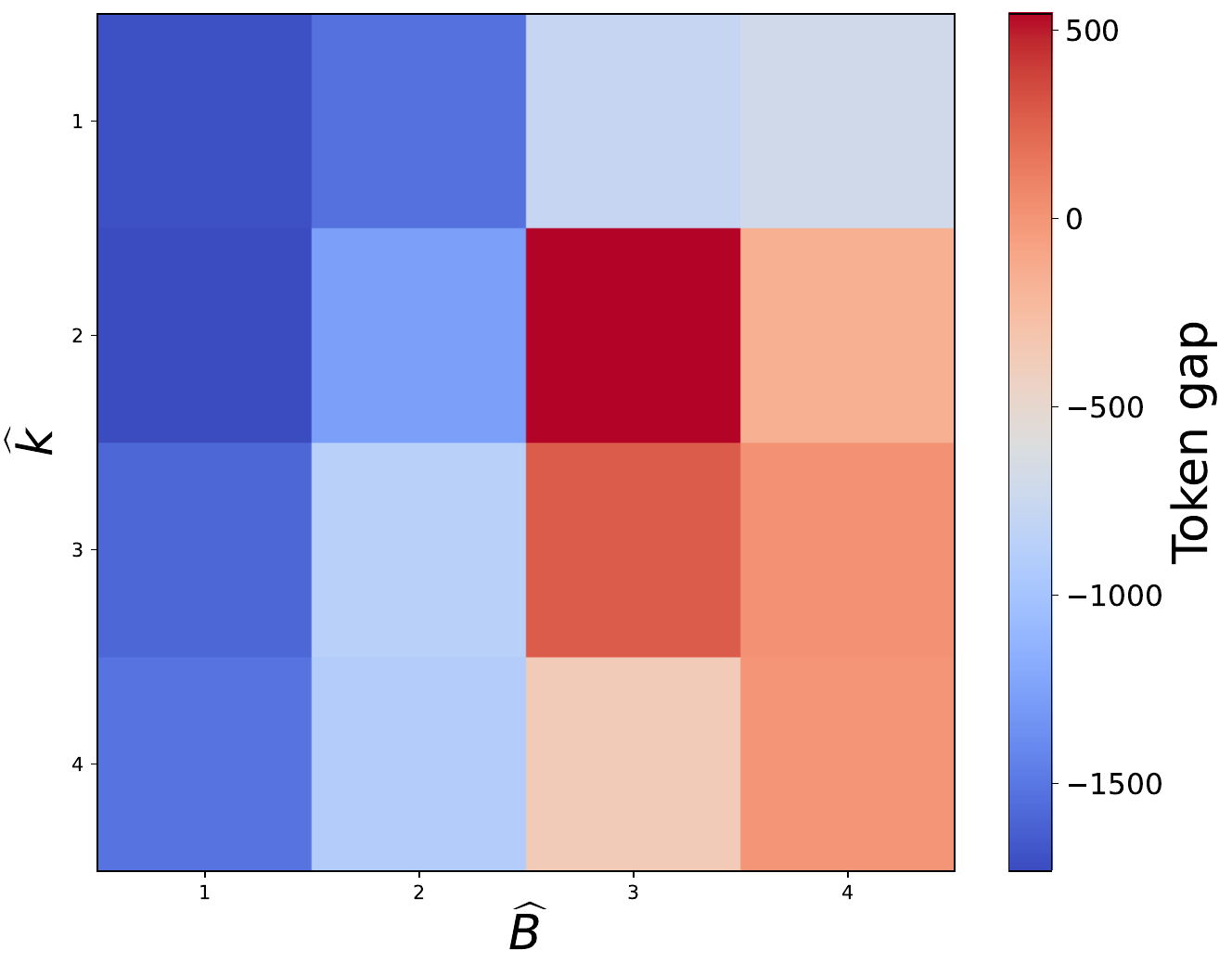}}
    \subfigure[Conf.\ --- Time Gap]{
        \includegraphics[width=0.3\linewidth]{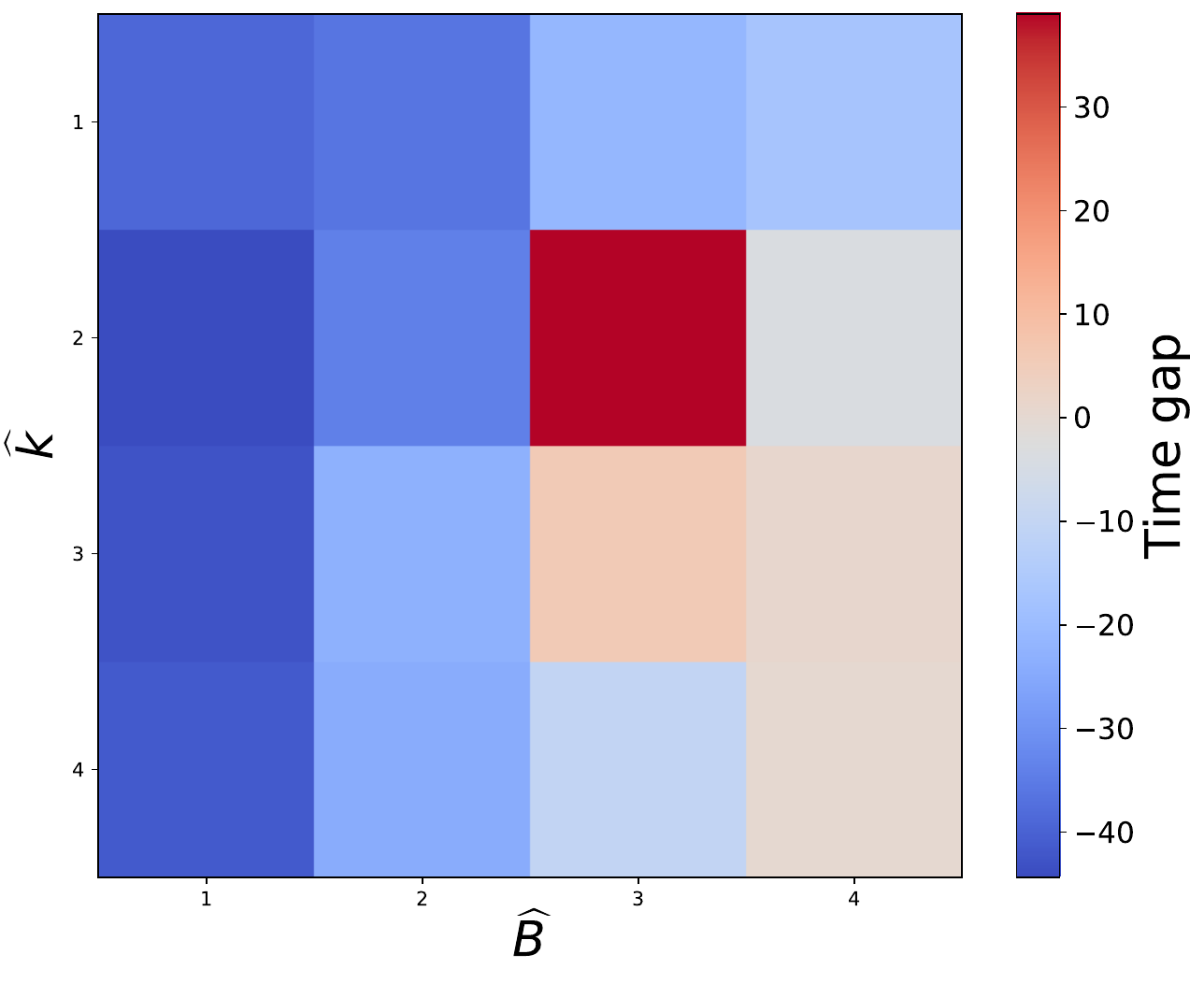}}
    \\
    \subfigure[PRM --- Acc.\ Gap]{
        \includegraphics[width=0.3\linewidth]{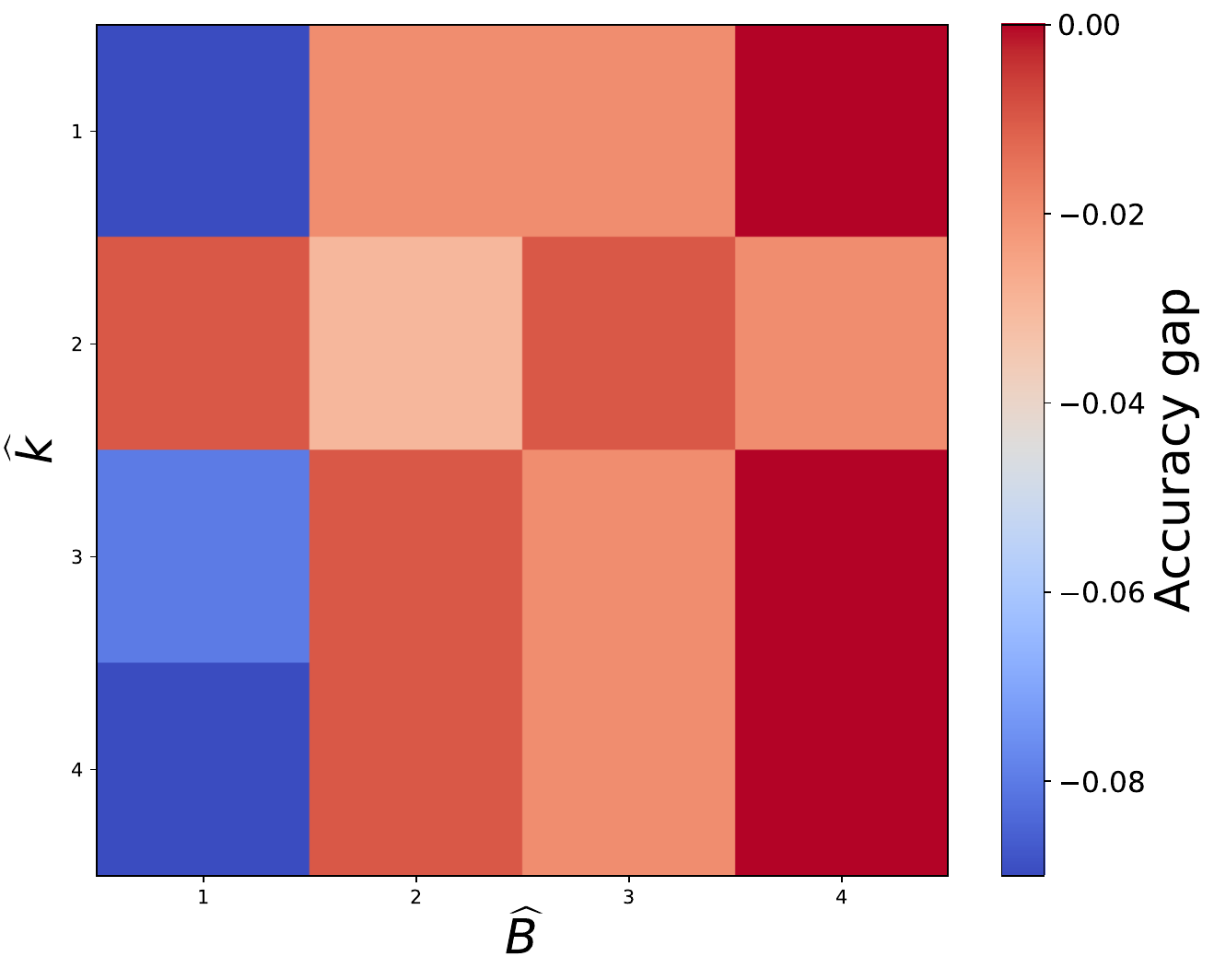}}
    \subfigure[PRM --- Token Gap]{
        \includegraphics[width=0.3\linewidth]{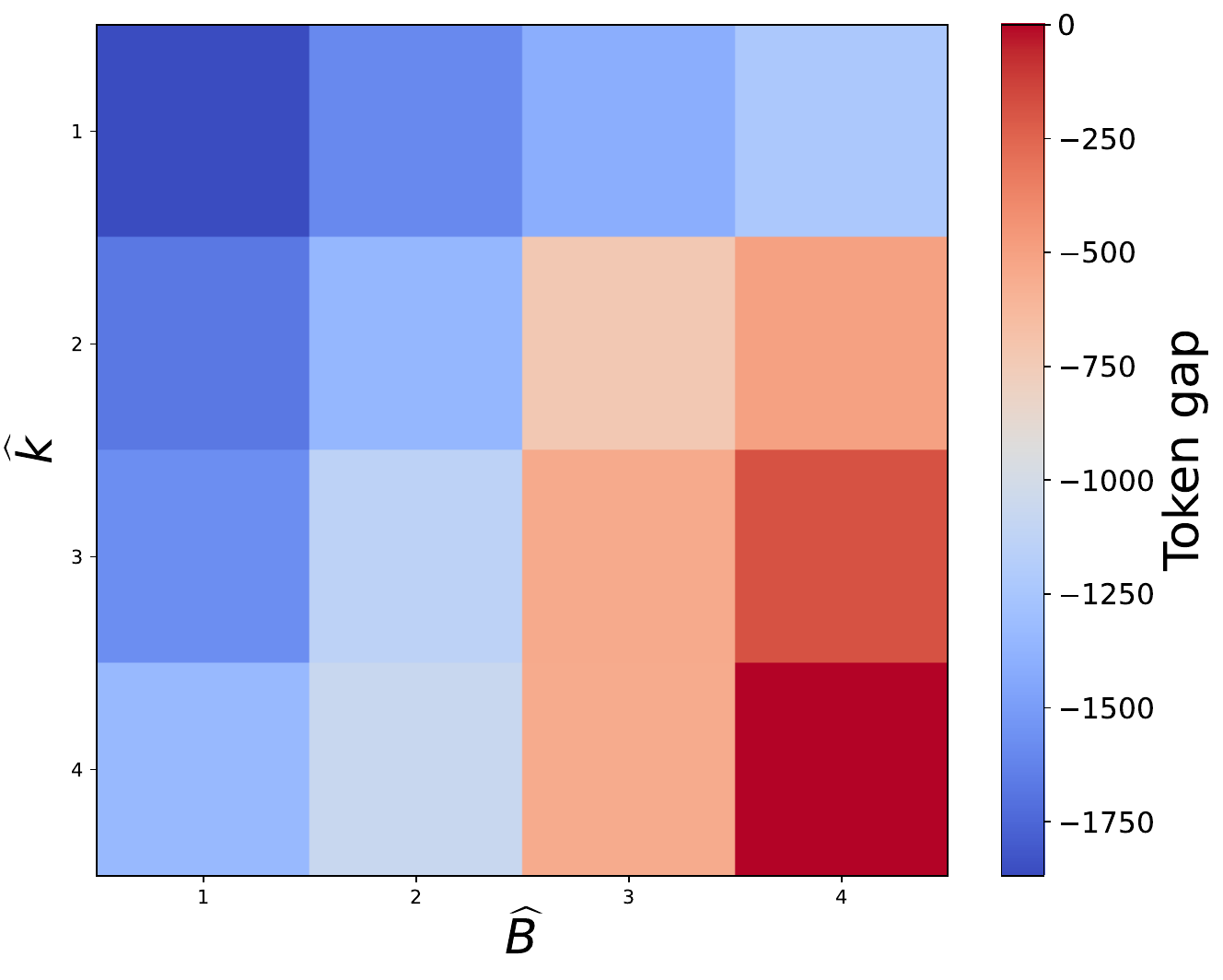}}
    \subfigure[PRM --- Time Gap]{
        \includegraphics[width=0.3\linewidth]{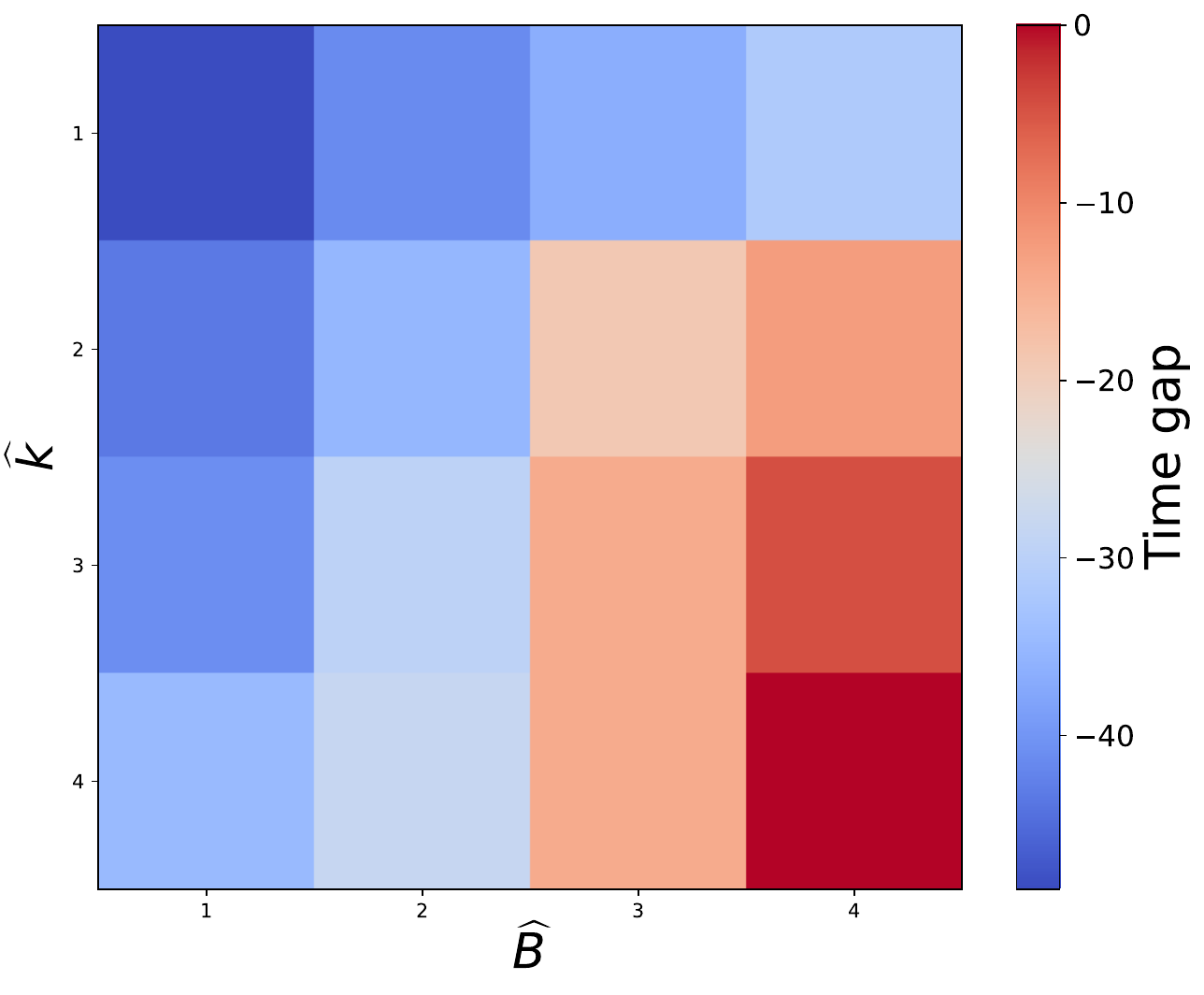}}
    \caption{Hyperparameter sensitivity of $(\widehat{k},\widehat{B})$
    under confidence-guided (top) and PRM-guided (bottom) Beam Search
    on GSM8K. Each cell reports the gap relative to the vanilla Beam
    Search baseline in accuracy, token usage, and wall-clock time.}
    \label{fig:parameters_sensitivity}
\end{figure*}

\section{Ablation study on key hyperparameters}

To understand the influence of the pruning count~$\widehat{k}$ and
the reduced budget~$\widehat{B}$, we conduct a fine-grained ablation
under both Conf.- and PRM-guided Beam Search on GSM8K
(Figure~\ref{fig:parameters_sensitivity}).
Each heatmap reports the gap relative to the vanilla Beam Search
baseline in accuracy, token usage, and wall-clock time.
The results reveal a consistent pattern across both verifiers:
moderate values near the centre of the grid achieve the smallest
accuracy drop or even slight improvements, while increasing~$\widehat{k}$
and decreasing~$\widehat{B}$ yield substantial efficiency savings.
Crucially, \textbf{setting $\widehat{k}=k/2$ and $\widehat{B}=B/2$
consistently hits the sweet spot} that is aggressive enough to cut
cost but not so aggressive as to harm accuracy.

\end{document}